\documentclass{article}

\usepackage{microtype}
\usepackage{graphicx}
\usepackage{subcaption}
\usepackage{booktabs} % for professional tables

\usepackage{hyperref}

\usepackage[preprint]{icml2026}

\usepackage{amsmath}
\usepackage{bm}
\usepackage{amssymb}
\usepackage{mathtools}
\usepackage{amsthm}
\usepackage{algorithmic}

\usepackage[capitalize,noabbrev]{cleveref}

\theoremstyle{plain}
\newtheorem{theorem}{Theorem}[section]
\newtheorem{proposition}[theorem]{Proposition}
\newtheorem{lemma}[theorem]{Lemma}
\newtheorem{corollary}[theorem]{Corollary}
\theoremstyle{definition}
\newtheorem{definition}[theorem]{Definition}
\newtheorem{assumption}[theorem]{Assumption}
\theoremstyle{remark}
\newtheorem{remark}[theorem]{Remark}

\newtheorem{example}{Example}

\newcommand{\E}{\mathbb{E}}

\newcommand{\Var}{\mathrm{var}}

\DeclareMathOperator*{\argmin}{arg\,min}

\newcommand{\Env}{\mathbb{E}_n}
\newcommand{\calH}{\mathcal{H}}
\newcommand{\calX}{\mathcal{X}}
\newcommand{\calY}{\mathcal{Y}}
\newcommand{\calW}{\mathcal{W}}
\newcommand{\calN}{\mathcal{N}}
\newcommand{\norm}[1]{\left\|#1\right\|_2}
\newcommand{\inner}[2]{\langle #1,\, #2 \rangle}
\icmltitlerunning{ScoreStop: Gradient-based early stopping using functional score tests}

\begin{document}

\twocolumn[
  \icmltitle{ScoreStop: Gradient-based early stopping using functional score tests}

  % It is OKAY to include author information, even for blind submissions: the
  % style file will automatically remove it for you unless you've provided
  % the [accepted] option to the icml2026 package.

  % List of affiliations: The first argument should be a (short) identifier you
  % will use later to specify author affiliations Academic affiliations
  % should list Department, University, City, Region, Country Industry
  % affiliations should list Company, City, Region, Country

  % You can specify symbols, otherwise they are numbered in order. Ideally, you
  % should not use this facility. Affiliations will be numbered in order of
  % appearance and this is the preferred way.
  \icmlsetsymbol{equal}{*}

  \begin{icmlauthorlist}
    \icmlauthor{Oliver J. Hines}{columbia}
    \icmlauthor{Christian L. Hines}{turing}
  \end{icmlauthorlist}

  \icmlaffiliation{columbia}{Columbia University, NY, USA}
  \icmlaffiliation{turing}{Alan Turing Institute, London, UK}
  \icmlcorrespondingauthor{Oliver J. Hines}{oh2272@cumc.columbia.edu}

  % You may provide any keywords that you find helpful for describing your
  % paper; these are used to populate the "keywords" metadata in the PDF but
  % will not be shown in the document
  \icmlkeywords{Boosting, Hypothesis Testing}

  \vskip 0.3in
]

% this must go after the closing bracket ] following \twocolumn[ ...

% This command actually creates the footnote in the first column listing the
% affiliations and the copyright notice. The command takes one argument, which
% is text to display at the start of the footnote. The \icmlEqualContribution
% command is standard text for equal contribution. Remove it (just {}) if you
% do not need this facility.

% Use ONE of the following lines. DO NOT remove the command.
% If you have no special notice, KEEP empty braces:
\printAffiliationsAndNotice{}  % no special notice (required even if empty)
% Or, if applicable, use the standard equal contribution text:
% \printAffiliationsAndNotice{\icmlEqualContribution}

\begin{abstract}
    Gradient boosted decision trees require a stopping rule to avoid overfitting. The standard rule monitors a validation loss and stops if the loss fails to improve for a fixed patience period. However, the patience parameter has no interpretable scale and validation losses can be noisy or implicitly defined by a user-specified gradient. We propose \textbf{ScoreStop}, a gradient-based early-stopping rule that casts the stopping decision at each iteration as a test of the null hypothesis that the current predictor is the population risk minimizer. We use a functional score test, computed on validation data, with a statistic that is scale-invariant in the update direction, with a known asymptotic distribution under the null. Because our test uses gradients rather than loss values, the same construction applies to implicit losses such as LambdaRank, and data-dependent losses such as Cox regression via influence functions. In synthetic experiments and real-data benchmarks, we show that ScoreStop is competitive with loss-based methods.
\end{abstract}

\section{Introduction}
\label{sect:introduction}

Gradient boosted decision trees \citep{friedman_greedy_2001, mason_boosting_2000} are effective tabular prediction algorithms, but they are also prone to overfitting when trained for too many boosting iterations.
Tuning the number of boosting iterations is typically done through \emph{loss-based} early stopping: a (hold-out) validation loss is monitored and training is halted after a user-specified number of consecutive non-improving iterations, controlled by a \emph{patience} parameter.
However, the patience parameter is weakly calibrated, since the same patience value can behave differently across datasets, sample sizes, and loss metrics, with performance that is sensitive to other hyperparameters such as the learning rate and those controlling the complexity of the base learners.

The loss itself is also not always the right primitive.
In learning-to-rank, methods such as LambdaRank are specified through gradients, rather than losses, with the gradients approximating those of a nonsmooth target ranking metric \citep{burges_ranknet_2010}.
Moreover, losses may be used which depend on the data-generating distribution, such as in Cox regression for survival analysis, where the unit loss depends on an unknown baseline hazard.
In such settings, evaluating the validation loss requires estimating nuisance functions that may bias the stopping decision unless the loss estimators are explicitly debiased.

We propose \emph{ScoreStop}, an early-stopping criterion that tests the null hypothesis that the current function estimator is the population risk minimizer.
At each boosting iteration, the next fitted base learner supplies a test direction, and the validation sample estimates the directional derivative of the risk in that direction.
Rescaling by the estimated standard deviation gives a scale-invariant statistic that is asymptotically $\chi^2$ distributed.
The threshold can therefore be expressed on a common calibrated scale, while still being used as a regularization parameter for prediction.
To our knowledge, ScoreStop is the first boosting early-stopping rule built around a hypothesis test; the closest related proposal of \citet{Mahsereci2017} compares the empirical mean and variance of training-sample gradients using a heuristic threshold.
By using the score-test framework of \citet{hudson_inference_2026}, ScoreStop also covers implicit and data-dependent losses through the same directional-derivative principle.

We derive the ScoreStop statistic for smooth, implicit, and data-dependent losses; prove its null and alternative behavior; and evaluate it on synthetic problems with known population minimizers and real tabular benchmarks spanning regression, classification, count, ranking, quantile, and survival tasks.

\section{Preliminaries}
\label{sec:background}

\label{sec:boosting-intro}
\textbf{Gradient boosting}: Let $W = (Y, X) \in \calW$ be a random variable consisting of an outcome $Y \in \calY$ and predictors $X \in \calX$, and let $\calH$ be a Hilbert space of measurable functions $f : \calX \to \mathbb{R}$ with the $L_2$ inner product $\inner{f}{g} = \E[f(X)g(X)]$ and the implied norm $\norm{f} = \sqrt{\inner{f}{f}}$.
For a differentiable unit loss $L : \mathbb{R} \times \calW \to \mathbb{R}$, define the population risk minimizer
$f_0 = \argmin_{f \in \calH} \E[L\{f(X), W\}]$,
which is assumed to exist and be unique.
We write $\nabla L$ for the partial derivative of $L$ with respect to its first argument.
Gradient boosted estimators for $f_0$, due to \citet{mason_boosting_2000} and \citet{friedman_greedy_2001}, are initialized at function $\hat{f}_1$, typically a constant, and iterated as
$\hat{f}_{m+1}(x) = \hat{f}_m(x) - \eta_m\,\hat{h}_m(x)$,
where $\eta_m \in (0,1]$ is a step size (learning rate) and $\hat{h}_m$ is a base learner, e.g. a shallow decision tree, that approximates the gradient function
\begin{equation}
  h^*(x; \hat{f}_m) = \E\!\left[\nabla L\{\hat{f}_m(X), W\} \;\big|\; X = x\right].
  \label{eq:neg_grad}
\end{equation}
Written without arguments, we refer to this function as $h^*(\hat{f}_m)$.
In practice, the base learner is fitted by regressing $\nabla L\{\hat{f}_m(X), W\}$ on the predictors $X$ using training data.

\label{sec:score_background}
\textbf{Score tests for function-valued parameters}: For a vector-valued parameter $\theta_0$ that minimizes a population log-likelihood, the classical score test of \citet{rao_large_1948} assesses the null hypothesis $H_0: \theta_0 = \theta_*$ for a hypothesized value $\theta_*$.
This test rejects the null when the derivative of the log-likelihood with respect to $\theta_0$, evaluated at $\theta_*$, is statistically distinguishable from zero.
The score test is appealing because it does not require an estimate $\hat{\theta}$ to be obtained under the alternative $H_0: \theta_0 \neq \theta_*$.
In contrast, the classical likelihood ratio test due to \citet{wilks_large-sample_1938}, is based on the difference in the log-likelihood evaluated at $\theta_*$ and $\hat{\theta}$.

\citet{hudson_inference_2026} extend the score test to function-valued parameters $f_0 = \argmin_{f \in \calH} R_0(f)$ that minimize a population risk $R_0: \calH \to \mathbb{R}$.
For a hypothesized function $f_*$, they assess the null hypothesis $H_0: f_0 = f_*$, using a principle similar to the classical score test:
their test rejects the null when the Gateaux derivative of the risk is statistically distinguishable from zero, with the derivative evaluated in $f_*$ and over a set of directions.
For a direction $h \in \calH$, the Gateaux derivative, also called the directional derivative, is the limit $R^\prime_0(h;f) = \partial R_0(f + th) / \partial t |_{t=0}$.

For example, $R_0(f) = \E[L\{f(X), W\}]$ has the derivative $R^\prime_0(h;f) = \E[h(X) \nabla L\{f(X), W\}]$.
Since $f_0$ is a minimizer, $R^\prime_0(h;f_0) = 0$ for all possible directions $h \in \calH$, hence a test can be constructed by empirically estimating the maximum of $R^\prime_0(h;f_*)$ over a set of directions.
In practice, their test rejects the null when this maximum
exceeds a threshold that is obtained via a bootstrap-calibrated null distribution.
Like the classical score test, the functional score test does not require an estimate $\hat{f}$ to be obtained under the alternative $H_0: f_0 \neq f_*$.
The analog of the classical likelihood ratio test in the functional setting would be to base inference on an estimate of the risk difference $R_0(f_*) - R_0(\hat{f})$, which does require
the unconstrained minimizer $\hat{f}$ to be learned.

Our proposal, ScoreStop, can be viewed as an analytically calibrated instance of the functional score testing framework:
we evaluate the Gateaux derivative in the single direction provided by the boosting base learner, which is an optimal direction according to the Cauchy--Schwarz inequality.
We construct a test statistic that is chi-squared distributed under the null, without requiring the bootstrap for inference.
Our proposal is compared with loss-based early stopping, which is similar, in spirit, to the classical likelihood ratio test.

\section{The ScoreStop statistic}
\label{sec:method}

\noindent\textbf{Smooth unit losses.}
We develop an early stopping procedure that tests, using a functional score test at each boosting iteration, whether the population risk minimizer has been found.
By the Gateaux derivative formula of Section~\ref{sec:score_background}, we let
$R'_0(h;f) = \E\left[s(W; h, f)\right]$
where we define the score function $s(W; h, f) = h(X)\,\nabla L\{f(X), W\}$, and note that $R'_0(h;f_0) = 0$ for all directions $h \in \calH$.
Given validation data $\{W_i\}_{i=1}^{n}$, drawn independently of the training data, we empirically approximate $R'_0(h;f)$ as $R'_n(h;f) = \Env\left[s(W; h, f)\right]$.
Under the null $H_0: f_* = f_0$, and fixing $h\neq 0$, the Central Limit Theorem gives
\begin{align}
    \sqrt{n}R'_n(h;f_*)\overset{d}{\to} \calN \left\{0;\E\left[s(W; h, f_*)^2\right]\right\}, \label{eq:clt}
\end{align}
where $\calN\{\mu; \sigma^2\}$ denotes a normal distribution with mean $\mu$ and variance $\sigma^2$; and the variance above is assumed to be nonzero and finite.
Estimating the variance by $\Env[s(W; h, f_*)^2]$
and squaring, gives the \emph{ScoreStop statistic}
\begin{equation}
  T_n(h, f_*) =
    \frac{n\;\Env\!\left[s(W; h, f_*)\right]^2}
         {\Env\!\left[s(W; h, f_*)^2\right]}.
  \label{eq:tstat}
\end{equation}
Theoretical results for this statistic are discussed in Section~\ref{sect:theory}, but these are summarized as follows:
Proposition~\ref{prop:null} confirms that, under the null hypothesis,
$T_n(h, f_*) \overset{d}{\to} \chi^2_1$ for all directions $h$,
where $\chi^2_1$ represents a Chi-squared distribution with $1$ degree of freedom;
Proposition~\ref{prop:power} confirms that $T_n(h, f_*) \to \infty$ under the alternative hypothesis, as long as $\inner{h} {h^*(f_*)} \neq 0$, where $h^*$ is the gradient function in Section~\ref{sec:boosting-intro};
Corollary~\ref{corol:opt} motivates the direction $h = h^*(f_*)$, which is close to optimal.

Moreover, $T_n(h, f_*)$ is scale-invariant in $h$ and $L$.
That is, for any $c \neq 0$,
replacing $h$ with $ch$ or $\nabla L$ with $c \nabla L$
leaves $T_n$ unchanged.
In particular, the sign convention for $\nabla L$ does not matter, and
$T_n(\hat{h}_m, \hat{f}_m) = T_n(\hat{f}_{m+1} - \hat{f}_m, \hat{f}_m)$
regardless of the choice of learning rate $\eta_m$.

The following examples illustrate how $T_n$ specializes to several standard boosting losses; in each case the gradient and risk minimizer follow directly from the loss definition.

\begin{example}[Squared-error loss]
$L_2$-boosting \citep{buhlmann_boosting_2003} uses
$L\{f(x), w\} = \{y - f(x)\}^2$, with the gradient
$\nabla L\{f(x),w\} = -2\{y - f(x)\}$, and hence
$T_n(h, f) = n\Env\left[h(X)\{Y - f(X)\}\right]^2 / \Env\left[h(X)^2\{Y - f(X)\}^2\right]$
with risk minimizer $f_0(x) = \E[Y\mid X=x]$.
\end{example}

\begin{example}[Logistic loss]
For a binary outcome $\calY = \{-1, +1\}$,
LogitBoost \citep{friedman_logitboost_2000} uses the logistic loss
$L\{f(x),w\} = \log(1 + \exp\{-yf(x)\})$ with gradient
$\nabla L\{f(x),w\} = -y[1 + \exp\{yf(x)\}]^{-1}$, and risk minimizer
$f_0(x) = \log\frac{P(Y=1\mid X=x)}{P(Y=-1\mid X=x)}$.
\end{example}

\begin{example}[Poisson loss]
For non-negative count outcomes in $\calY \subseteq \mathbb{N}$,
the Poisson loss is $L\{f(x), w\} = \exp\{f(x)\} - y f(x)$, with the gradient
$\nabla L\{f(x), w\} = \exp\{f(x)\} - y$, and risk minimizer $f_0(x) = \log\E[Y \mid X = x]$.
\end{example}

In the examples above, the function $f:\calX \to \mathbb{R}$ has a scalar output. However, the ScoreStop framework also generalizes to vector prediction problems, where the target function has vector-valued output of dimension $K$, and the loss is a function $L: \mathbb{R}^K\times \calW \to \mathbb{R}$.
The risk minimizer $f_0$ is an element of  the Hilbert space $\mathcal{H}$ of functions $f: \calX \to \mathbb{R}^K$, with inner product $\inner{h}{f} = \E[h(X)^\top f(X)]$, where $a^\top b$ denotes the Euclidean inner product.
In this Hilbert space, the Gateaux derivative of the loss is
\begin{align}
    s(W; h, f) &= \frac{\partial L\{f(X) + th(X), W\}}{\partial t}\Big|_{t=0} \nonumber \\
    &= \sum_{k=1}^{K} h_k(X)\,\nabla_k L\{f(X), W\} \label{eq:score}
\end{align}
where $h_k$ is the $k$th component of $h$ and $\nabla_k$ is the derivative of $L$ with respect to the $k$th component of the first argument.
Hence, $R'_0(h;f) = \E[s(W; h, f)]$, which is consistent with the definition of the ScoreStop statistic in \eqref{eq:tstat}.
We illustrate this construction in the following example.

\begin{example}[Multiclass softmax loss]
\label{ex:multiclass}
For categorical outcomes $\calY = \{1, \ldots, K\}$, multiclass classification uses the loss
$L\{f(x), w\} = -f_y(x) + \log\sum_{k=1}^{K} \exp\{f_k(x)\}$. The $k$th component of the risk minimizer is
$f_{0,k}(x) = \log P(Y = k \mid X = x) + c$ for a constant $c$, which is uniquely determined by the requirement that the class probability predictions sum to 1.
The Gateaux derivative of the loss is obtained by \eqref{eq:score} with
$\nabla_k L\{f(x), w\} = \sigma_k\{f(x)\} - \mathbf{1}(y = k)$,
where $\sigma_k$ is the $k$th component of the softmax function.
\end{example}

\noindent\textbf{Nonsmooth and implicitly defined losses.}
The smooth-loss construction above assumes that $L\{f(x), w\}$ is Gateaux differentiable with respect to the prediction vector $f(x)$.
However, gradient boosting is often applied in settings where the unit loss is not differentiable, such as in quantile regression and learning-to-rank.
For quantile regression, this is achieved by using an element of the subgradient in place of the derivative.
The ScoreStop statistic readily accommodates such extensions without affecting the theoretical results in Section~\ref{sect:theory}.

\begin{example}[Quantile loss]
\label{ex:quantile}
For a user-specified quantile $\tau \in (0,1)$, the check function loss
$L\{f(x), w\} = \{y - f(x)\}[\tau - \mathbf{1}\{y < f(x)\}]$
has a risk minimizer given by the conditional quantile
$f_0(x) = F^{-1}_{Y|X}(\tau \mid x)$.
However, the check function loss is not differentiable at $y = f(x)$, but, due to the convexity of $L$, admits a subgradient $[-\tau, 1 - \tau]$ at this point.
A ScoreStop statistic can be constructed by choosing a single element of the subgradient. For example, by setting $\nabla L\{f(x), w\} = \mathbf{1}\{y \leq f(x)\} - \tau$, which implies the choice $\nabla L\{y, w\} = 1-\tau$.
\end{example}

In learning-to-rank, the goal is to learn a relevance function $f: \mathcal{X} \to \mathbb{R}$ that can be used to rank documents according to their relevance with respect to a given query.
The relevance function for a given document is called the \emph{document score}, not to be confused with the \emph{statistical scores} used in the score test.
Standard ranking metrics, such as Normalized Discounted Cumulative Gain (NDCG), are not continuous in the document scores, making learning-to-rank challenging for gradient based methods.
Specifically, the NDCG depends only on the ranking implied by the document scores, and hence is insensitive to perturbations of the document scores that do not affect the ranking.

To overcome this issue in learning-to-rank problems, such as LambdaRank by \citet{burges_lambdarank_2006}, the user specifies a gradient function that implicitly defines a loss when the Poincaré symmetry condition holds \citep[Section 4.3]{burges_ranknet_2010}. The implicit loss is often a smooth approximation of a target nonsmooth, e.g. NDCG, loss and is rarely computed itself. Instead, loss-based early stopping for learning-to-rank usually relies on the nonsmooth target loss. This may lead to situations where the implicit loss continues to improve in the validation sample, but training is halted because the nonsmooth loss does not change.
However, the ScoreStop statistic can be computed based on the user-specified gradient function and therefore may be more sensitive to improvements in the document scores that are not reflected by the nonsmooth loss.

\begin{example}[LambdaRank]
    Consider the random query $W=(N, Y, X)$ where $N \in \{1, .., n_{\max}\}$ is the number of documents.
    The vectors $X=(X_1, .. ,X_N)$ and $Y = (Y_1, ..., Y_N)$ are predictors and relevance labels for each document, with $X_k \in \calX$ and $Y_k \in \calY \subseteq \mathbb{N}$.
    For example, $\calY = \{1, 2\}$ indicates whether a document is high-relevance $(Y_k=2)$ or low-relevance $(Y_k=1)$.
    We consider a Hilbert space $\calH$ of relevance functions $f: \mathcal{X} \to \mathbb{R}$, and let $\underline{f}_N(X) = (f(X_1),\ldots, f(X_N))$ denote the document scores for a given query.
    The LambdaRank gradient for the $k$th document is a user-specified version $\lambda_k\{\underline{f}_N(X), W\}$, which plays the role of a gradient $\nabla_k L$ in analogy with \eqref{eq:score}. By the Poincaré Lemma, $\lambda_k$ is the gradient of an implied loss $L$ when, for all $j,k$
    \begin{align*}
        \frac{\partial\lambda_k\{\underline{f}_N(X), W\}}{\partial f(X_j)} = \frac{\partial\lambda_j\{\underline{f}_N(X), W\}}{\partial f(X_k)}.
    \end{align*}
    Perturbing the implied loss $L$ in the direction $h \in \calH$, we define
    \begin{align*}
        s(W; h, f) &= \sum_{k=1}^{N} h(X_k)\lambda_k\{\underline{f}_N(X), W\},
    \end{align*}
    which equals the Gateaux derivative
    \begin{align*}
        \frac{\partial L\{\underline{f}_N(X) + t \underline{h}_N(X), W\}}{\partial t}\Big|_{t=0}
    \end{align*}
    when the Poincaré symmetry condition holds.
    This expression for $s(W; h, f)$ can be used to estimate the ScoreStop statistic in \eqref{eq:tstat}. See Appendix~\ref{app:lambdarank} for detailed expressions for $\lambda_j$ and the nonsmooth NDCG-based target loss.
\end{example}

\noindent\textbf{Data-dependent losses.}
The ScoreStop constructions above assume the score function $s: \calW\times \calH^2 \to \mathbb{R}$ is known.
Here, we consider more complicated losses, where the score function depends on the data-generating distribution $P_0$, e.g. in Cox regression for survival analysis the gradient depends on the baseline cumulative hazard, which can be treated as a nuisance function.
Following the general framework of \citet{hudson_inference_2026}, we show that ScoreStop extends naturally to such settings by replacing
the score contribution $s(W; h,f)$ with an
\emph{influence function} that accounts for data dependence. See e.g. \citet{hines_demystifying_2022} for an introduction to influence functions.

First, consider that the validity of the ScoreStop statistic is based on the CLT result in \eqref{eq:clt}. This result depends on the estimator $\hat{R}_{n}^\prime(h; f_*)$ of $R^\prime_{0}(h; f_*)$ being \emph{regular asymptotically linear} under the null $H_0: f_* = f_0$, which means that
\begin{align}
    \hat{R}_{n}^\prime(h; f_*) = \Env[\varphi(W; h, f_*, P_0)] + \epsilon_n \label{eq:ral}
\end{align}
for an influence function $\varphi(W; h, f_*, P_0)$ that is mean-zero under the null, and has finite variance. Note that $\epsilon_n$ denotes a term such that $\sqrt{n}\epsilon_n = o_p(1)$, where $o_p(1)$ means: converges to zero in probability as $n \to \infty$.
The \emph{regular} qualifier here is the standard semiparametric notion \citep{tsiatis_semiparametric_2006}, which strengthens \eqref{eq:ral} with an invariance condition under local perturbations of $P_0$ that is automatic for the estimators we consider.
When the score function is known then the estimator $\Env[s(W; h, f_*)]$ is regular asymptotically linear with influence function
$\varphi(W; h, f_*, P_0) = s(W; h, f_*)$ and $\epsilon_n = 0$.
We illustrate how the influence function departs from this form in survival regression. See e.g. \citet{ridgeway_state_1999} and \citet[Section 5.2]{tsiatis_semiparametric_2006} for references.

\begin{example}[Cox proportional hazards model]
\label{ex:cox}
    Survival analysis considers iid units, with baseline predictors $\calX$, which are observed until they either have a failure event or a censoring event, e.g. because no failure event has occurred by the end of the experimental observation period.
    This data is encoded as $W = (Y, X)$, where $Y = (D,T)$ consists of an event indicator $D \in \{0, 1\}$ and an observed time $T\in\mathbb{R}_+$, i.e. $D = 1$ implies failure at time $T$ and $D = 0$ implies censoring at time $T$.
    % It is assumed that the censoring time is conditionally independent of the event time given $X$.
    Under the Cox proportional hazards model the cumulative event hazard is modeled as $\exp\{f_0(x)\}\Lambda_0(t)$, for an unknown baseline hazard function $\Lambda_0:\mathbb{R}_+\to\mathbb{R}_+$, and log-hazard ratio $f_0:\calX \to \mathbb{R}$ in a Hilbert space $\calH$. The Gateaux derivative of the unit loss is
    \begin{align*}
        s(W; h, f, \Lambda_0) = h(X)[\exp\{f(X)\}\Lambda_0(T) - D]
    \end{align*}
    which depends on $\Lambda_0$.
    Estimating $R^\prime_{0}(h; f_*) = \E[s(W; h, f_*, \Lambda_0)]$ requires an estimator for $\Lambda_0$, typically the Breslow estimator $\hat{\Lambda}_{n,f_*}$.
    Reasoning under the null $H_0: f_* = f_0$,
    the estimator $\hat{R}_n^\prime(h; f_*) = \Env[s(W; h, f_*, \hat{\Lambda}_{n,f_*})]$
    is regular asymptotically linear with influence function
    \begin{align*}
        \varphi(W; h, f_*, P_0) &= s(W; h, f_*, \Lambda_0)
        + D \bar{h}(T) \\
        &\quad - \exp\{f_*(X)\}\int_{0}^T\bar{h}(t)\mathrm{d}\Lambda_{0}(t),
    \end{align*}
    where $\bar{h}(t)$ is a weighted expectation of $h(X)$ that depends on $f_*$ and is defined in Appendix~\ref{app:survival} along with detailed derivations. Heuristically, the additional terms after $s(W; h, f_*, \Lambda_0)$ account for the uncertainty in estimating $\Lambda_0$ and could be discarded if $\Lambda_0$ were known a priori.
\end{example}

To extend the ScoreStop statistic to data-dependent losses we restrict attention to the \emph{one-step debiased estimator} of $R_0^\prime(h; f_*)$, which for a distribution estimator $\hat{P}_n$ is $\hat{R}'_n(h;f_*) = \Env[\varphi(W; h, f_*, \hat{P}_n)]$.
In practice it is not necessary that $\hat{P}_n$ estimates the full distribution $P_0$, only the components, such as $\Lambda_0$, that appear in the influence function.
For the one-step debiased estimator, formal statistical results are used to bound the error $\epsilon_n = \Env[\varphi(W; h, f_*, \hat{P}_n) - \varphi(W; h, f_*, P_0)]$ in terms of the error rates in estimating components, such as $\Lambda_0$, and the complexity of the estimator $\hat{P}_n$.
We will defer such details and assume that $\sqrt{n} \epsilon_n$ converges to zero, and hence the one-step debiased estimator is regular asymptotically linear as in \eqref{eq:ral}.
Conveniently, for the Breslow estimator in Example \ref{ex:cox}, the estimator $\Env[s(W; h, f_*, \hat{\Lambda}_{n,f_*})]$ is exactly the one-step debiased estimator.

To derive a general ScoreStop statistic we use a CLT result similar to \eqref{eq:clt} under the null $H_0: f_* = f_0$
\begin{align}
    \sqrt{n}R'_n(h;f_*)\overset{d}{\to} \calN \left\{0;\E\left[\varphi(W; h, f_*, P_0)^2\right]\right\},\label{eq:clt_if}
\end{align}
where the variance is assumed to be nonzero and finite when $h \neq 0$.
Estimating this variance as $\Env[\varphi(W; h, f_*, \hat{P}_n)^2]$, we define the general form of the ScoreStop statistic
\begin{align}
    T_n(h, f_*) = \frac{n\Env[\varphi(W; h, f_*, \hat{P}_n)]^2}{\Env[\varphi(W; h, f_*, \hat{P}_n)^2]}, \label{eq:tstat_if}
\end{align}
which reduces to \eqref{eq:tstat} when the score function is known.
In this way, we are able to naturally apply score-based early stopping to data-dependent losses.

\begin{remark}
    When the loss is data-dependent, applying loss-based early stopping is similarly challenging, since estimated nuisance functions are required to evaluate the validation risk $R_0(f_*)$.
    For instance, in Example \ref{ex:cox}, the validation risk and the ScoreStop statistic both rely on an estimator for the baseline hazard $\Lambda_0$.
    In both cases, the Breslow estimator $\hat{\Lambda}_{n,f_*}$ can be used, which is based on the validation sample.
    In particular, this means that the baseline hazard estimator used for training and validation are different.
    Generally, plugging-in an estimated nuisance function to estimate the risk may introduce bias if the uncertainty in the nuisance function estimator is not properly acknowledged, e.g. by using a one-step debiased estimator.
\end{remark}

\section{Theoretical Results}
\label{sect:theory}

In this Section we examine asymptotic results for the ScoreStop statistic in the limit as the sample size $n\to\infty$.
We center our analysis around the influence curve representation in \eqref{eq:tstat_if}, thereby allowing our results to extend to implicit and data-dependent losses as discussed in Section~\ref{sec:method}.
For a distribution estimator $\hat{P}_n$, we use the shorthand $\hat{\varphi}(h,f_*) = \varphi(W; h, f_*, \hat{P}_n)$ and $\varphi_0(h,f_*) = \varphi(W; h, f_*, P_0)$.
We require the following regularity assumptions, which encode the relevant properties of the Gateaux derivative $R^\prime_0(h;f_*)$ in terms of the influence function $\varphi$.

\begin{assumption}[Influence function centrality]
\label{assump:central}
    For all $h \in \calH$, the identity $\E[\varphi_0(h,f_0)] = 0$ holds.
\end{assumption}

\begin{assumption}[Direction linearity]
\label{assump:direction}
    The map $h \mapsto \E[\varphi_0(h,f_*)]$ is a continuous linear functional, and hence, by the Riesz representation theorem, can be represented as $\inner{h}{h^*(f_*)}$ for a unique $h^*(f_*) \in \calH$. Note that $h^*(f_*) \in \calH$ requires that $\norm{h^*(f_*)} < \infty$.
\end{assumption}

\begin{assumption}[Finite variance]
\label{assump:variance}
    For all $h \in \calH$, with $h\neq 0$, $\E[\varphi_0(h,f_0)^2] \in (0, \infty)$.
\end{assumption}

Interpreting these assumptions for the smooth unit loss $\varphi_0(h, f) = h(X)\nabla L\{f(X), W\}$, Assumption~\ref{assump:central} is satisfied since $f_0$ is the risk minimizer, and the Riesz representing function in Assumption~\ref{assump:direction} recovers the gradient function $h^*(x;f)$ in \eqref{eq:neg_grad} that is learned by the base learner in the standard gradient boosting algorithm.
This intuition for the smooth unit loss extends to all of the examples in the current paper.
Assumption~\ref{assump:direction} requires $h^*(x;f_*)$ to have finite variance, which, along with Assumption~\ref{assump:variance}, precludes pathological data situations, e.g. for the squared-error loss when $\E[\Var(Y|X)]$ is unbounded.
Finally, we make an additional assumption, that relates to the regular asymptotic linearity expression in \eqref{eq:ral}.

\begin{assumption}[Regular asymptotic linearity]
\label{assump:ral}
    For fixed $h$; $\sqrt{n}\epsilon_n = o_p(1)$, where $\epsilon_n = \Env[\hat{\varphi}(h,f_0) - \varphi_0(h,f_0)]$.
\end{assumption}

\begin{assumption}[Moment consistency]
\label{assump:consistency}
    Fix $h$ and $f_*$; $\Env[\hat{\varphi}(h, f_*)^k - \varphi_0(h, f_*)^k] = o_p(1)$ for $k = 1, 2$.
\end{assumption}

Assumptions~\ref{assump:ral} and \ref{assump:consistency} are relevant for data-dependent losses, where components of $\varphi_0$ must be estimated.
The term $\sqrt{n}\epsilon_n$ is typically analyzed on a case-by-case basis by decomposing as an empirical process $\sqrt{n}(\Env - \E)\{\hat{\varphi}(h,f_0) - \varphi_0(h,f_0)\}$ and a remainder $\sqrt{n}\E[\hat{\varphi}(h,f_0) - \varphi_0(h,f_0)]$. Controlling the empirical process usually requires restricting the complexity of the estimator $\hat{\varphi}$, or using cross-fitting, while the remainder usually requires conditions on the rates of convergence of nuisance functionals. See for example \citet{hines_demystifying_2022} and \citet{laan_researchers_2026}.

Under these assumptions we derive the following statements which characterize the behavior of the ScoreStop statistic $T_n(h, f_*) = n\Env[\hat{\varphi}(h, f_*)]^2/\Env[\hat{\varphi}(h, f_*)^2]$ under the null and the alternative hypothesis. See Appendix~\ref{app:proofs} for proofs.

\begin{proposition}[Distribution under the null]
\label{prop:null}
Under the null hypothesis $H_0: f_* = f_0$,
$T_n(h, f_*) \overset{d}{\to} \chi^2_1$ as $n \to \infty$ for fixed $h$, where $\chi^2_d$ denotes the chi-squared distribution with $d$ degrees of freedom.
\end{proposition}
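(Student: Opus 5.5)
The plan is to write $T_n$ as the square of a ratio, $T_n(h,f_0) = Z_n^2$ with $Z_n = \sqrt{n}\,\Env[\hat{\varphi}(h,f_0)] / \sqrt{V_n}$ and $V_n = \Env[\hat{\varphi}(h,f_0)^2]$. We then show that the numerator converges in distribution to a centred normal and that the denominator is consistent for that normal's variance. Slutsky's lemma and the continuous mapping theorem then finish the argument. Throughout we fix $h \neq 0$ and work under $H_0$, so $f_* = f_0$.

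\textbf{Step 1 (numerator).} By Assumption~\ref{assump:ral},
\begin{align*}
\sqrt{n}\,\Env[\hat{\varphi}(h,f_0)] = \sqrt{n}\,\Env[\varphi_0(h,f_0)] + \sqrt{n}\,\epsilon_n ,
\end{align*}
where $\sqrt{n}\,\epsilon_n = o_p(1)$. The variables $\varphi_0(h,f_0)$, evaluated at the validation observations $W_i$, are iid functions of the validation data; here $h$ is treated as fixed, for instance by conditioning on the independent training sample. They have mean zero by Assumption~\ref{assump:central}. Their variance $\sigma^2 = \E[\varphi_0(h,f_0)^2]$ lies in $(0,\infty)$ by Assumption~\ref{assump:variance}. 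The Lindeberg--L\'evy CLT therefore gives $\sqrt{n}\,\Env[\varphi_0(h,f_0)] \overset{d}{\to} \calN\{0;\sigma^2\}$. Adding the $o_p(1)$ term leaves the limit unchanged, by Slutsky's lemma.

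\textbf{Step 2 (denominator).} Decompose
\begin{align*}
V_n = \Env[\varphi_0(h,f_0)^2] + \Env[\hat{\varphi}(h,f_0)^2 - \varphi_0(h,f_0)^2].
\end{align*}
The first term converges in probability to $\sigma^2$ by the weak law of large numbers, which only needs the finite first moment of $\varphi_0^2$ supplied by Assumption~\ref{assump:variance}. The second term is $o_p(1)$ by Assumption~\ref{assump:consistency} with $k=2$ and $f_* = f_0$. Hence $V_n \overset{p}{\to} \sigma^2 > 0$.

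\textbf{Step 3 (conclusion).} Since $\sigma^2 > 0$, the map $v \mapsto 1/\sqrt{v}$ is continuous at $\sigma^2$, so $V_n^{-1/2} \overset{p}{\to} \sigma^{-1}$. Slutsky then gives $Z_n \overset{d}{\to} \calN\{0;1\}$, and the continuous mapping theorem applied to $z \mapsto z^2$ gives $T_n(h,f_0) \overset{d}{\to} \chi^2_1$. On the event $V_n = 0$ the statistic is undefined; this event has probability tending to zero because $V_n \overset{p}{\to} \sigma^2 > 0$, so it does not affect the limit.

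The only delicate point is making sure each assumption is invoked with $f_* = f_0$ and $h$ fixed, and that the denominator's limit is strictly positive. Positivity is exactly what Assumption~\ref{assump:variance} provides, and it is the step that would fail otherwise.
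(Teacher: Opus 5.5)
Your proposal is correct and takes essentially the same route as the paper. The paper proves the $d$-direction version by combining a CLT for $Z_n=\sqrt{n}\,\Env[\hat{\varphi}]$ (regular asymptotic linearity plus the ordinary CLT, joined by Slutsky) with consistency of the inverse second-moment estimator (second-order moment consistency plus the law of large numbers), then writes $T_n = Z_n^\top \Sigma^{-1} Z_n + o_p(1)$ using tightness of $Z_n$. Your standardize-then-square argument is the $d=1$ case of this, and it additionally makes the restriction $h \neq 0$ and the event $V_n = 0$ explicit.
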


\begin{proposition}[Power consistency]
\label{prop:power}
Fix $h \neq 0$ and assume $\E[\varphi_0(h,f_*)^2] \in (0, \infty)$.
Under the alternative hypothesis $H_0: f_* \neq f_0$,
$P\{T_n(h, f_*) > c_\alpha\} \to 1$ as $n\to \infty$ for any fixed $c_\alpha$
provided that $\inner{h}{h^*(f_*)} \neq 0$.
\end{proposition}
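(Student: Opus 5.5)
The approach is to show that the numerator of $T_n(h,f_*)/n$ converges to a strictly positive constant, while the denominator converges to a finite positive constant. Then $T_n(h,f_*)$ grows linearly in $n$ with probability tending to one.

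\textbf{Numerator.} By Assumption~\ref{assump:direction}, $\mu := \E[\varphi_0(h,f_*)] = \inner{h}{h^*(f_*)}$, which is nonzero by hypothesis. The finite-variance condition $\E[\varphi_0(h,f_*)^2] < \infty$ lets the weak law of large numbers apply to the iid terms $\varphi_0(h,f_*)$, giving $\Env[\varphi_0(h,f_*)] \to \mu$ in probability. Assumption~\ref{assump:consistency} with $k=1$ gives $\Env[\hat\varphi(h,f_*)] - \Env[\varphi_0(h,f_*)] = o_p(1)$. Hence $\Env[\hat\varphi(h,f_*)] \to \mu$, and by the continuous mapping theorem $\Env[\hat\varphi(h,f_*)]^2 \to \mu^2 > 0$.

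\textbf{Denominator.} Again by the law of large numbers, $\Env[\varphi_0(h,f_*)^2] \to \sigma^2 := \E[\varphi_0(h,f_*)^2]$, which lies in $(0,\infty)$ by assumption. Note that $\sigma^2 \ge \mu^2 > 0$ in any case. Assumption~\ref{assump:consistency} with $k=2$ then transfers this convergence to $\Env[\hat\varphi(h,f_*)^2]$. Combining both limits by Slutsky's lemma (or continuous mapping, since $\sigma^2>0$),
\[
T_n(h,f_*)/n \;=\; \frac{\Env[\hat\varphi(h,f_*)]^2}{\Env[\hat\varphi(h,f_*)^2]} \;\to\; \mu^2/\sigma^2 =: \rho > 0
\]
in probability.

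\textbf{Conclusion.} Fix $c_\alpha$. For $n > 2c_\alpha/\rho$, the event $\{T_n/n > \rho/2\}$ implies $T_n > n\rho/2 > c_\alpha$. Hence
\[
P\{T_n > c_\alpha\} \ge P\{T_n/n > \rho/2\} \to 1.
\]

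\textbf{Main obstacle.} This step is mostly bookkeeping. The only subtle points are these:
\begin{itemize}
\item The denominator's limit must be bounded away from zero and from infinity. The stated hypothesis on $\E[\varphi_0(h,f_*)^2]$ provides this; Assumption~\ref{assump:variance} only covers $f_0$, not $f_*$.
\item The law of large numbers for the second moment needs only the first moment of $\varphi_0^2$, i.e.\ $\sigma^2<\infty$, so no fourth moment is required.
\item Assumption~\ref{assump:consistency} is stated at the fixed alternative $f_*$ rather than at $f_0$, which is exactly what is needed here.
\end{itemize}
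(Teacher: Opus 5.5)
Your proof is correct and is essentially the paper's argument specialized to $d=1$. The paper writes $T_n/n = (n^{-1/2}Z_n+\mu)^\top\hat\Sigma_n^{-1}(n^{-1/2}Z_n+\mu)$ and uses first-order consistency, the law of large numbers, and precision consistency to obtain $T_n/n \to \mu^\top\Sigma^{-1}\mu>0$ in probability; in the scalar case this is exactly your $\mu^2/\sigma^2$ limit, and your explicit final step and the Jensen remark $\sigma^2\ge\mu^2$ are minor refinements of the same route.
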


Together, these propositions confirm the validity for hypothesis testing based on the ScoreStop statistic $T_n(h, f_*)$. However, Proposition~\ref{prop:power} highlights the sensitivity of this test to the direction $h$.
In particular, for a fixed direction $h$, tests based on $T_n(h, f_*)$ have no power to test against alternatives $f_*$ that satisfy $\inner{h}{h^*(f_*)} = 0$.
Similar problems for conditional independence testing are reported by \citet{shah_hardness_2020}, who show that it is impossible to construct a test that has power over \emph{any} alternative.
The direction selection problem can be partly resolved by choosing a direction $h$ that maximizes $|\inner{h}{h^*(f_*)}|$.

\begin{corollary}[Approximately optimal direction]
\label{corol:opt}
    By the Cauchy--Schwarz inequality, $|\inner{h}{h^*(f_*)}|$ is maximized when $h$ is proportional to $h^*(f_*)$.
    Thus, for such a direction $h$, $h^*(f_*) \neq 0$ and $\E[\varphi_0(h,f_*)^2] \in (0, \infty)$ guarantees power consistency in Proposition~\ref{prop:power}.
\end{corollary}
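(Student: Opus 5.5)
The corollary has two parts, and I would prove them separately. The first is the Cauchy--Schwarz maximization. The second reduces to Proposition~\ref{prop:power}, which I take as given.

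\emph{Maximization.} By Assumption~\ref{assump:direction}, $h^*(f_*) \in \calH$. Because $T_n$ is scale-invariant in $h$, I normalize directions to $\norm{h} = 1$ (equivalently, compare $|\inner{h}{h^*(f_*)}|/\norm{h}$). Cauchy--Schwarz in $\calH$ gives
\begin{equation*}
|\inner{h}{h^*(f_*)}| \le \norm{h}\,\norm{h^*(f_*)}.
\end{equation*}
Equality holds if and only if $h$ and $h^*(f_*)$ are linearly dependent. The upper bound is attained by $h = c\,h^*(f_*)$ for any $c \neq 0$. For such $h$ we get $\inner{h}{h^*(f_*)} = c\norm{h^*(f_*)}^2$. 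I would also remark that the normalization is essential: without it the supremum is infinite. This is harmless for the test, since $T_n(ch, f_*) = T_n(h, f_*)$.

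\emph{Power consistency.} Take $h = c\,h^*(f_*)$ with $c \neq 0$, and assume $h^*(f_*) \neq 0$. Then $h \neq 0$, and $\inner{h}{h^*(f_*)} = c\norm{h^*(f_*)}^2 \neq 0$, because a nonzero element of a Hilbert space has positive norm. Together with the stated moment condition $\E[\varphi_0(h,f_*)^2] \in (0,\infty)$, this is exactly the list of hypotheses of Proposition~\ref{prop:power}. Hence $P\{T_n(h,f_*) > c_\alpha\} \to 1$ under the alternative.

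There is essentially no obstacle here. The one point needing care is that the optimality is only over the normalized direction, or equivalently up to scale. In practice $\hat{h}_m$ only approximates $h^*(\hat{f}_m)$, which is why the statement says ``approximately'' optimal. I would also note that if $h^*(f_*) = 0$ under the alternative, the conclusion fails, since the inner product vanishes for every $h$. That is why the nonzero condition is stated explicitly.
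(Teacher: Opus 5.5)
Your proposal is correct and follows essentially the same route as the paper. Cauchy--Schwarz gives $|\inner{h}{h^*(f_*)}| \le \norm{h}\,\norm{h^*(f_*)}$ with equality iff $h \propto h^*(f_*)$, and then $\inner{h}{h^*(f_*)} = c\,\norm{h^*(f_*)}^2 \neq 0$ verifies the hypothesis of Proposition~\ref{prop:power}; your remark that the maximization is only meaningful up to scale (which is harmless because $T_n$ is scale-invariant in $h$) is a clarification the paper leaves implicit.
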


Through Corollary~\ref{corol:opt}, we see that the direction $h^*(f_*)$, targeted by the gradient boosting base learner, provides improved guarantees of power consistency. We say that this direction is \emph{approximately} optimal since it maximizes the ScoreStop statistic numerator $\Env[\hat{\varphi}(h, f_*)]^2$ in the population limit. However, to derive an optimal direction for the statistic itself one must also consider the denominator $\Env[\hat{\varphi}(h, f_*)^2]$, which we assume is finite and nonzero.
\citet[Section 4.2]{hudson_inference_2026} discusses direction optimality with regards to local asymptotic power, where $\E[\varphi_0(h,f_*)] = \sqrt{n}c$, for a constant $c$, and a local regularity condition holds, similar to \eqref{eq:ral}.
For the purposes of early stopping, we consider the gradient boosting learner from the training sample to be a practical and well motivated choice.

Finally, consider that an alternative approach to improving power consistency is to test in multiple directions.
For example, let $\varphi_0(\bm{h}, f)$ denote the vector $(\varphi_0(h_1, f), ..., \varphi_0(h_d, f))$ for a direction set $\{h_1, ..., h_d\}$ of size $d$. The ScoreStop statistic naturally extends to multiple directions as $T_n(\bm{h}, f_*) = n\Env[\hat{\varphi}(\bm{h}, f_*)]^\top \Env[\hat{\varphi}(\bm{h}, f_*)\hat{\varphi}(\bm{h}, f_*)^\top]^{-1}\Env[\hat{\varphi}(\bm{h}, f_*)]$.
In Appendix~\ref{app:proofs} we provide results, similar to those above, for this multi-direction statistic.
In particular, under the null hypothesis, $T_n(\bm{h}, f_*) \to \chi^2_d$, where we assume that the matrix $\E[\varphi_0(\bm{h}, f_*)\varphi_0(\bm{h}, f_*)^\top]$ is invertible, i.e. a multi-direction version of Assumption~\ref{assump:variance}.

\section{Algorithm and Calibration}
\label{sect:algorithm}

We propose Forward ScoreStop (FWD-SS) as described in Algorithm \ref{alg:fwd}.
This algorithm performs gradient boosting as usual, but, at each iteration, formally tests whether to accept the update $\hat{h}_m$, using a score test with null $\hat{f}_m = f_0$ and an external validation set.
This algorithm is \emph{forward} in the sense that the ScoreStop statistic $T_n(\hat{f}_{m+1} - \hat{f}_m, \hat{f}_m)$ is evaluated in the update direction, which is approximately optimal by Corollary \ref{corol:opt}.
A related Backward ScoreStop (BWD-SS) algorithm instead tests in the direction of the previous tree, i.e. $T_n(\hat{f}_{m} - \hat{f}_{m-1}, \hat{f}_m)$.
We also propose a multi-direction alternative called Stabilized ScoreStop (STAB-SS), that tests in both the forward and backward directions.
We call this algorithm \emph{stabilized}, since the reliance on any single base learner is reduced.
Pseudo-code for BWD-SS and STAB-SS are provided in Appendix~\ref{app:pseudo-code}. These are similar to Algorithm \ref{alg:fwd} but differ in the computation of $T$, and do not test on the $m=1$ iteration, since the backward direction is not yet defined.

\begin{algorithm}[tb]
  \caption{Forward ScoreStop (FWD-SS)}
  \label{alg:fwd}
  \begin{algorithmic}
    \REQUIRE Training data $\mathcal{D}_\mathrm{tr}$,
             validation data $\mathcal{D}_\mathrm{val}$,
             gradient $\nabla L$,
             threshold $c_\alpha > 0$,
             maximum iterations $M$, learning rate sequence $\{\eta_m\}_{m=1}^{M}$, initial estimator $\hat{f}_1$.
    \FOR{$m = 1, \ldots, M$}
      \STATE \textbf{Fit base learner on training data:}\\
             \quad $\hat{h}_m(x) \leftarrow
               \mathrm{Tree}\!\left(
                 \nabla L\{\hat{f}_m(X_i), W_i\},\; i \in \mathcal{D}_\mathrm{tr}
               \right)$
      \STATE \textbf{Evaluate score statistic on validation data:}\\
              \quad $T \leftarrow T_n(\hat{h}_m, \hat{f}_m)$
      \IF{$T \leq c_\alpha$}
        \RETURN $\hat{f}_m$
      \ENDIF
    \STATE $\hat{f}_{m+1}(x) \leftarrow \hat{f}_m(x) - \eta_m\,\hat{h}_m(x)$
    \ENDFOR
    \RETURN $\hat{f}_M$
  \end{algorithmic}
\end{algorithm}

\begin{remark}[Small learning rate collinearity]
When the learning rate is small, successive boosting updates change the model slightly, and the directions $\hat{h}_m$ and $\hat{h}_{m+1}$ are almost the same. In this regime: FWD-SS and BWD-SS are expected to perform similarly, since they use almost the same direction; and the covariance matrix in STAB-SS approaches rank 1, making it singular. Based on previous studies of Wald-type statistics with rank-deficient covariance, we expect the asymptotic distribution of the STAB-SS ScoreStop statistic to deviate from $\chi^2_2$ in the near-collinearity limit \citep{andrews_asymptotic_1987, dufour_wald_2025}. See Appendix~\ref{app:experiments} for numerical investigations into this issue.
\end{remark}

\textbf{Choosing the threshold.}
Applying Proposition~\ref{prop:null}, the threshold $c_\alpha$ can be related to a significance level via $c_\alpha = F^{-1}_{\chi^2_d}(1 - \alpha)$, where $d$ is the number of directions; $d=1$ for FWD-SS / BWD-SS, and $d=2$ for STAB-SS.
% For example, $\alpha = 0.05$ and $d=1$ gives $c_\alpha \approx 1.96^2$, with 1.96 standard deviations often appearing in the construction of 95\% confidence intervals in applied statistics.
Usually in hypothesis testing scenarios, however, one chooses $\alpha$ to be small, so that the null is only rejected when there is strong statistical evidence.
In applied statistics, confidence intervals/sets are obtained by inverting this logic, returning the set of values for which have high ($1 - \alpha$) coverage of the null, e.g.\ $\alpha = 0.05$ is a common choice which corresponds to the 95\% confidence interval.
Our goal, however, is not confidence set estimation but \emph{point estimation}: we seek a single function $\hat{f}$ that is as close to $f_0$ as possible.
The point estimate can be viewed as a function that is contained in even the smallest possible confidence sets, i.e.\ the point estimate is not rejected even when $\alpha$ is large, for instance when $\alpha = 0.95$, the resulting 5\% confidence set should contain the point estimate.
In practice, $c_\alpha$ should therefore be regarded as a regularization parameter that is calibrated by the $\chi^2_d$ distribution but chosen to minimize prediction error rather than to control a type-I error rate.
In particular, the choice of $c_\alpha$ controls the early stopping trade-off:
large $c_\alpha$ corresponds to a wide confidence set (small $\alpha$) that may stop training too early;
small $c_\alpha$ corresponds to a tight confidence set (large $\alpha$) that permits more boosting iterations but may lead to overfitting.
To aid interpretability, we communicate the significance level via the two-sided $z$-score $z=\Phi^{-1}(1-\alpha/2)$, i.e. the number of standard deviations of a normal random variable. E.g. for $d=1$, $c_\alpha=z^2$.
The experimental thresholds $z \in \{0.025, 0.05, 0.1, 0.2, 0.3\}$ are used in Section~\ref{sect:experiments},
which we interpret as, e.g.: ``stop training when $\hat{f}_m$ is within 0.05 standard deviations of an ideal, point estimator''.
Appendix Table~\ref{tab:threshold-mapping} gives the numerical mapping from $z$ to $c_\alpha$ for $d \in \{1, 2\}$.

\section{Numerical experiments}
\label{sect:experiments}

\textbf{Synthetic data experiments.} We study four synthetic problems with known population minimizers $f_0$: continuous outcome regression using the squared error loss, binary classification, learning-to-rank, and survival analysis using proportional hazards models; see Appendix~\ref{app:dgp} for details.
For each task, we draw random synthetic datasets which are split into train ($n_\mathrm{tr}=2{,}000$), validation ($n_\mathrm{val}=500$), and test ($n_\mathrm{te} = 10{,}000$) sets.
We compare ScoreStop (FWD/BWD/STAB) and loss-based stopping rules for gradient boosted trees with a maximum of $M=2{,}000$ boosting iterations and constant learning rate $\eta=0.05$.
Our findings are robust to the learning rate, as we demonstrate through a learning rate sensitivity study in Appendix~\ref{app:lr-sensitivity}.
Figure~\ref{fig:illustrative-regression} shows illustrative trajectories for the FWD-SS statistic and the validation/test loss for a single regression seed.
We see that the FWD-SS statistic is large during the early iterations and decreases as $\hat f_m$ approaches the population minimizer.
Once boosting has converged the FWD-SS statistic fluctuates around the support of the $\chi^2_1$ distribution, see Appendix~\ref{app:null-calibration} for analysis of the null distribution.

\begin{figure}[tb]
  \centering
  \includegraphics[width=\columnwidth]{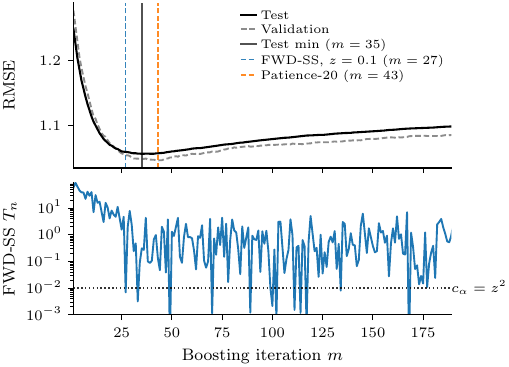}
  \caption{Single regression trajectory at $\eta = 0.05$. Top: validation and test RMSE, with vertical lines for the test-loss minimum, FWD-SS at $z=0.1$, and Patience-20. Bottom: FWD-SS statistic $T_n$ with threshold $c_\alpha=z^2=0.01$.}
  \label{fig:illustrative-regression}
\end{figure}

For each stopping method, we compare the test loss of the returned function to the test loss of the known population minimizer $f_0$, with the difference referred to as the \emph{excess test loss}.
Figure~\ref{fig:synthetic-summary} summarizes median excess test loss for each stopping method and task over $100$ Monte Carlo seeds; numeric values and median return iterations are in Appendix Table~\ref{tab:synthetic-combined-main}.
These results show that, across all tasks, ScoreStop is competitive with loss-based methods when the patience value is small, but is mostly beaten by loss-based early stopping when the patience value is large.
However, ScoreStop is particularly strong on learning-to-rank, where loss-based stopping rules perform poorly because they target a nonsmooth loss that is only indirectly tied to the training gradient.
We see that STAB-SS usually outperforms FWD-SS and BWD-SS, especially on regression and survival tasks, perhaps due to its reduced reliance on any single base learner.
Based on these experiments, we recommend STAB-SS and FWD-SS with thresholds between $z=0.05$ and $z=0.1$ as strong ScoreStop variants for general use.

\begin{figure*}[tb]
  \centering
  \includegraphics[width=\textwidth]{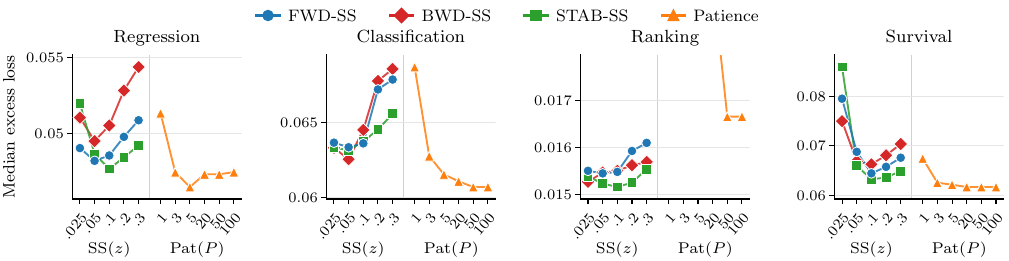}
  \caption{Synthetic experiments at $\eta=0.05$: median excess test loss over the known population minimizer $f_0$ for ScoreStop thresholds and patience values. Full numeric results and returned iterations are in Appendix Table~\ref{tab:synthetic-combined-main}.}
  \label{fig:synthetic-summary}
\end{figure*}

\textbf{Real data benchmarks.} We evaluate on thirteen benchmark tabular dataset problems spanning regression, binary and multiclass classification, Poisson regression for count data, quantile regression, ranking, and survival.
All datasets except MEPS and MSLR-WEB10K are from OpenML \citep{vanschoren2013openml}; MEPS is from \citet{ahrq2023mepshc233}; MSLR-WEB10K is from \citet{qin_introducing_2010}.
For non-ranking datasets we use 10-fold cross-validation, splitting the non-test folds into training and validation sets in a 75/25 ratio; classification, multiclass, and survival folds are stratified.
MSLR-WEB10K uses its five predefined folds.
All models use LightGBM with 31 leaves and learning rate $\eta=0.05$.
Training is extended for $1{,}000$ rounds after the hold-out test-loss minimum has been observed, with a maximum of $M=20{,}000$ rounds; the stopping rules under evaluation never see the test loss.
Because $f_0$ is unknown, Figure~\ref{fig:real-data-summary} and Appendix Table~\ref{tab:real_data} report median excess test loss over an ``oracle loss'', which is the minimum test loss achieved on the test fold; min--max fold ranges are also shown in the figure.

\begin{figure*}[tb]
  \centering
  \includegraphics[width=\textwidth]{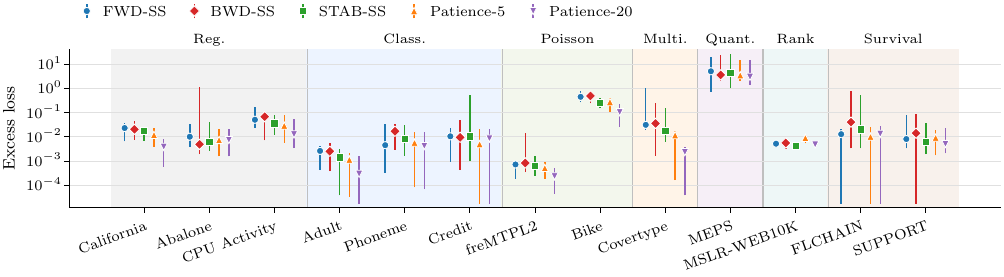}
  \caption{Real-data benchmarks: excess test loss over the retrospective test-loss oracle. Markers are fold medians and vertical bars span fold minima and maxima. ScoreStop uses fixed threshold $z=0.05$; patience baselines use $P \in \{5,20\}$.}
  \label{fig:real-data-summary}
\end{figure*}

The real-data results are broadly consistent with the synthetic experiments:
ScoreStop is competitive with loss-based stopping rules, when the patience value is small.
The three ScoreStop variants perform similarly, with no clear winner across all datasets.

\section{Discussion and related work}
\label{sect:discussion}

\textbf{Online versus lookback stopping.}
The ScoreStop algorithms here are \emph{online}: they commit to $\hat f_m$ at iteration $m$ and do not revisit earlier iterates.
Patience is a \emph{lookback} rule: after $P$ non-improving iterations it returns the validation-loss minimizer among the recent trajectory.
This distinction partly explains why patience can be strong when validation loss is smooth and reliable, while ScoreStop can be useful when the validation loss is noisy or only indirectly tied to the training gradient.
A lookback ScoreStop variant is therefore a natural extension.

\textbf{Anytime-valid inference.}
ScoreStop uses a sequence of validation-set score statistics, but we do not claim anytime-valid type-I error control.
Confidence sequences, e-values, and related methods preserve validity across data-dependent stopping times by constructing martingales or nonnegative supermartingales adapted to a sequential filtration \citep{howard_time-uniform_2021, ramdas_game-theoretic_2023}.
Recent work has used this perspective to derive stopping rules for learning algorithms. For example, \citet{sommer2026evaluebde} formulate the construction of Bayesian deep ensembles as a sequential anytime-valid hypothesis test based on e-values, while \citet{aolaritei2025sgdcs} construct anytime-valid confidence sequences for stochastic gradient methods.
These works target optional-stopping-valid inference over a sequential process. In contrast, ScoreStop uses a fixed-time functional score statistic repeatedly as a score-calibrated regularization criterion for gradient boosting.
The main challenge in developing anytime-valid inference is that the sequence of ScoreStop statistics, e.g. $\{T_n(\hat h_m, \hat f_m)\}_{m=1}^M$, is computed on a single validation sample, meaning that successive statistics are not independent.

\textbf{Early stopping as regularization.}
Prior work treats early stopping as implicit regularization with prediction-error guarantees, including $L_2$Boost \citep{buhlmann_boosting_2003}, boosted classifiers \citep{zhang_boosting_2005}, and kernel or gradient methods \citep{raskutti_early_2014, wei_early_2019}.
Cross-validation is another stopping criterion but requires repeated model fits \citep{buhlmann_hothorn_2007}.
Connecting ScoreStop thresholds to these prediction-error analyses remains open.

\textbf{Confidence sets by test inversion.}
\citet{hudson_inference_2026} originally motivate the functional score test by inverting it to obtain a confidence set for $f_0$; i.e., a set of functions $f_*$ that are not rejected by the score test at significance level $\alpha$.
Their test takes the supremum over a user-specified set of direction functions; Corollary~\ref{corol:opt} suggests a single gradient-boosted base learner trained on external data may serve as an approximately optimal direction.

\textbf{Other iterative learning algorithms.}
The score test framework is not specific to gradient boosting: the function difference $\hat h_m = \hat f_{m+1} - \hat f_m$ defines a natural test direction for any iterative procedure, including neural networks fitted with gradient descent.
\citet{Mahsereci2017} propose a related criterion that compares the parameter-gradient magnitude to its empirical variance but lacks a calibrated null; algorithmic-stability bounds \citep{hardt_train_2016} and the bias-variance analysis of \citet{yao_early_2007} for kernel methods provide further alternatives.

\section{Conclusion}
\label{sect:conclusion}

We introduced ScoreStop, a gradient-based early-stopping criterion that casts each stopping decision as a functional score test.
Its statistic is calibrated by an asymptotic $\chi^2_d$ null distribution, is computed from the gradient rather than the loss, and extends to implicit and data-dependent losses through influence functions.
Across synthetic and real-data benchmarks, a fixed ScoreStop threshold provides a competitive alternative to patience-based stopping without tuning a dataset-specific patience value.
Future work includes anytime-valid variants, lookback variants, sharper links to prediction-error regularization theory, and applications beyond gradient boosting.

% \section*{Software and Data}

% If a paper is accepted, we strongly encourage the publication of software and
% data with the camera-ready version of the paper whenever appropriate. This can
% be done by including a URL in the camera-ready copy. However, \textbf{do not}
% include URLs that reveal your institution or identity in your submission for
% review. Instead, provide an anonymous URL or upload the material as
% ``Supplementary Material'' into the OpenReview reviewing system. Note that
% reviewers are not required to look at this material when writing their review.

% Acknowledgements should only appear in the accepted version.
% \section*{Acknowledgements}

% \textbf{Do not} include acknowledgements in the initial version of the paper
% submitted for blind review.

% \section*{Impact Statement}

% This paper presents work whose goal is to advance the field of Machine Learning. There are many potential societal consequences of our work, none which we feel must be specifically highlighted here.

% In the unusual situation where you want a paper to appear in the
% references without citing it in the main text, use \nocite
% \nocite{langley00}

\bibliography{refs.bib}
\bibliographystyle{icml2026}

%%%%%%%%%%%%%%%%%%%%%%%%%%%%%%%%%%%%%%%%%%%%%%%%%%%%%%%%%%%%%%%%%%%%%%%%%%%%%%%
%%%%%%%%%%%%%%%%%%%%%%%%%%%%%%%%%%%%%%%%%%%%%%%%%%%%%%%%%%%%%%%%%%%%%%%%%%%%%%%
% APPENDIX
%%%%%%%%%%%%%%%%%%%%%%%%%%%%%%%%%%%%%%%%%%%%%%%%%%%%%%%%%%%%%%%%%%%%%%%%%%%%%%%
%%%%%%%%%%%%%%%%%%%%%%%%%%%%%%%%%%%%%%%%%%%%%%%%%%%%%%%%%%%%%%%%%%%%%%%%%%%%%%%
\newpage
\appendix
\onecolumn
\section{Numerical experiments}
\label{app:experiments}

\subsection{Algorithm pseudo-code}
\label{app:pseudo-code}

Algorithm~\ref{alg:bwd-stab} shows the backward (BWD-SS) and stabilized (STAB-SS) ScoreStop variants.
They differ from Algorithm~\ref{alg:fwd} only in the tested directions.
BWD-SS uses the previous update direction, while STAB-SS tests the current and previous directions jointly.
Both defer the first test to $m \geq 2$, since no previous tree is available at $m=1$.

\begin{algorithm}[H]
  \caption{Backward (left) and Stabilized (right) ScoreStop.
    Inputs (omitted) are identical to Algorithm~\ref{alg:fwd}.}
  \label{alg:bwd-stab}
  \begin{minipage}[t]{0.48\linewidth}
    \centering\textbf{BWD-SS}
    \begin{algorithmic}
      \FOR{$m = 1, \ldots, M$}
        \STATE \textbf{Fit base learner on training data:}\\
               \quad $\hat{h}_m(x) \leftarrow
                 \mathrm{Tree}\!\left(
                   \nabla L\{\hat{f}_m(X_i), W_i\},\; i \in \mathcal{D}_\mathrm{tr}
                 \right)$
        \IF{$m \geq 2$}
          \STATE \textbf{Evaluate score statistic on validation data:}\\
                  \quad $T \leftarrow T_n(\hat h_{m-1},\, \hat{f}_m)$
          \IF{$T \leq c_\alpha$}
            \RETURN $\hat{f}_m$
          \ENDIF
        \ENDIF
      \STATE $\hat{f}_{m+1}(x) \leftarrow \hat{f}_m(x) - \eta_m\,\hat{h}_m(x)$
      \ENDFOR
      \RETURN $\hat{f}_M$
    \end{algorithmic}
  \end{minipage}\hfill
  \begin{minipage}[t]{0.48\linewidth}
    \centering\textbf{STAB-SS}
    \begin{algorithmic}
      \FOR{$m = 1, \ldots, M$}
        \STATE \textbf{Fit base learner on training data:}\\
               \quad $\hat{h}_m(x) \leftarrow
                 \mathrm{Tree}\!\left(
                   \nabla L\{\hat{f}_m(X_i), W_i\},\; i \in \mathcal{D}_\mathrm{tr}
                 \right)$
        \IF{$m \geq 2$}
          \STATE \textbf{Evaluate multi-direction statistic:}\\
                  \quad $\mathbf{h}_m \leftarrow (\hat{h}_m,\; \hat h_{m-1})$\\
                  \quad $T \leftarrow T_n(\mathbf{h}_m,\, \hat{f}_m)$
          \IF{$T \leq c_\alpha$}
            \RETURN $\hat{f}_m$
          \ENDIF
        \ENDIF
      \STATE $\hat{f}_{m+1}(x) \leftarrow \hat{f}_m(x) - \eta_m\,\hat{h}_m(x)$
      \ENDFOR
      \RETURN $\hat{f}_M$
    \end{algorithmic}
  \end{minipage}
\end{algorithm}

\subsection{Setup}
\label{app:setup}

All synthetic experiments use LightGBM \citep{ke_lightgbm_2017} with \texttt{num\_leaves}~$=31$ and \texttt{min\_data\_in\_leaf}~$=20$.
The default learning rate is $\eta=0.05$; Appendix~\ref{app:lr-sensitivity} varies this value.
The four tasks are regression, binary classification, LambdaRank learning-to-rank, and Cox proportional-hazards survival.
The data-generating processes are given in Appendix~\ref{app:dgp}.

The regression, classification, and survival DGPs are run for $M=5{,}000$ boosting rounds and the ranking DGP for $M=2{,}000$ rounds.
For ranking we use LambdaRank with $\sigma = 1$, \texttt{lambdarank\_norm}~$=$~True, and \texttt{lambdarank\_truncation\_level}~$=50$.

We compare ScoreStop variants (FWD-SS, BWD-SS, STAB-SS) at thresholds $z \in \{0.025, 0.05, 0.1, 0.2, 0.3\}$ against loss-based stopping with patience $P \in \{1, 3, 5, 20, 50, 100\}$, where $P=1$ stops at the first non-improving validation-loss iteration.
The ScoreStop threshold is the $\chi^2_d$-calibrated critical value $c_\alpha = F^{-1}_{\chi^2_d}(1 - \alpha)$ corresponding to the two-sided $z$-score $z = \Phi^{-1}(1 - \alpha/2)$, with $d = 1$ for FWD-SS / BWD-SS and $d = 2$ for STAB-SS.
Numerical values for the thresholds are given in Table~\ref{tab:threshold-mapping}.
For loss-based early stopping, we report the model with the best validation loss during training, as is standard practice.
Because the population minimizer $f_0$ is known in synthetic experiments, we report excess test loss against the empirical test loss of $f_0$ on the same test sample (i.e., the test loss the population minimizer would itself attain); lower is better.
Each table reports the median across $100$ Monte Carlo seeds; for each seed we draw independent training, validation, and test sets.

\begin{table}[tb]
  \centering
  \caption{ScoreStop threshold scale.  The algorithm stops on the lower-tail event $T_n \le c_\alpha(z,d)$, where $\alpha(z)=2\{1-\Phi(z)\}$ and $c_\alpha(z,d)=F_{\chi^2_d}^{-1}\{1-\alpha(z)\}$.  The column $p_0 = 1 - \alpha$ is the probability under the null that a single statistic falls below $c_\alpha$; small $p_0$ means the threshold is conservative.}
  \label{tab:threshold-mapping}
  \small
  \begin{tabular}{rrrrr}
    \toprule
    $z$ & $\alpha(z)$ & $p_0$ & $c_\alpha(d=1)$ & $c_\alpha(d=2)$ \\
    \midrule
    0.025 & 0.980 & 0.020 & 0.000625 & 0.04029 \\
    0.05 & 0.960 & 0.040 & 0.0025 & 0.08139 \\
    0.1 & 0.920 & 0.080 & 0.01 & 0.166 \\
    0.2 & 0.841 & 0.159 & 0.04 & 0.3452 \\
    0.3 & 0.764 & 0.236 & 0.09 & 0.5379 \\
    1.96 & 0.050 & 0.950 & 3.842 & 5.992 \\
    \bottomrule
  \end{tabular}
\end{table}

\subsection{Synthetic data-generating processes}
\label{app:dgp}

We consider the following data-generating processes.

\textbf{Regression.}
$Y = f_0(X) + \varepsilon$, with
\begin{equation}
  f_0(X) = X^\top\beta
    + X_1 X_2
    + \mathbf{1}(X_4 > 0.5)\,X_2
    + \mathbf{1}(X_4 \leq 0.5)\,X_5,
  \label{eq:dgp}
\end{equation}
$X \sim \mathrm{Uniform}[0,1]^{10}$,
$\beta \sim \mathrm{Uniform}[0,1]^{10}$,
$\varepsilon \sim \mathcal{N}(0,1)$.
For each Monte Carlo seed a single draw of $\beta$ is fixed and shared
across the training, validation, and test splits so that the target
function $f_0$ is the same in all three; only the feature values $X$
and noise $\varepsilon$ are independently resampled.
Sample sizes (used throughout, except where noted):
$n_\mathrm{tr} = 2000$, $n_\mathrm{val} = 500$, and $n_\mathrm{te} = 10{,}000$.
This DGP combines linear effects, an interaction, and a threshold
non-linearity of the kind that gradient boosted trees model well.

\textbf{Classification.}
Adapted from the sparse additive model studied by \citet{wei_early_2019}:
\begin{equation}
  f_0(X) \;=\; 3\sin(2\pi X_1)
             + 3(2X_2 - 1)
             + 3(2X_3 - 1)(2X_4 - 1),
  \label{eq:binary-dgp}
\end{equation}
with $X \sim \mathrm{Uniform}[0,1]^{10}$ and
$P(Y = 1 \mid X) = \sigma(f_0(X))$, $\sigma(u) = (1+e^{-u})^{-1}$.
Sample sizes match the regression DGP.

\textbf{Survival.}
$f_0(X)$ is as in the regression DGP (Eq.~\ref{eq:dgp}) but interpreted
as a log-hazard ratio.  Failure times follow a Cox model with constant
baseline hazard $\Lambda_0(t) = t$:
\begin{equation}
  T^* \mid X \;\sim\; \mathrm{Exp}(\exp(f_0(X))),
  \qquad C \mid X \;\sim\; \mathrm{Exp}(\lambda_C),
  \label{eq:survival-dgp}
\end{equation}
with $C$ independent of $T^*$ and $\lambda_C$ chosen so the population
censoring rate is $30\%$
($\lambda_C = (3/7)\,\mathrm{median}(\exp f_0(X))$).
The observed data is $W = (D, T, X)$ with $T = \min(T^*, C)$ and
$D = \mathbf{1}\{T^* \leq C\}$.

\textbf{Ranking.}
Each query has a uniformly random number of documents in $[10, 40]$,
each with predictors $X \sim \mathrm{Uniform}[0,1]^{10}$.
The latent relevance is the sparse non-linear function
\begin{equation}
  f_0(X) = 2\sin(2\pi X_1) + 1.5\,X_2 X_3 + X_4 + \varepsilon,
  \label{eq:ranking-dgp}
\end{equation}
with $\varepsilon \sim \mathcal{N}(0, 0.5^2)$.
Integer relevance labels in $\{0, 1, 2, 3, 4\}$ are obtained by
digitising $f_0$ at the population quantiles $\{30, 55, 75, 90\}$,
giving a skewed label distribution with progressively fewer
high-relevance documents.
Sample sizes are matched to the per-task $n_\mathrm{tr} = 2000$,
$n_\mathrm{val} = 500$, and $n_\mathrm{te} = 10{,}000$ in
\emph{documents}; with documents per query uniform in $[10, 40]$
(mean $25$) this corresponds to $80$, $20$, and $400$ queries
respectively.

\subsection{Results at the default learning rate}
\label{app:fixed-lr-results}

Table~\ref{tab:synthetic-combined-main} gives the numeric values summarized in Figure~\ref{fig:synthetic-summary}, including BWD-SS and the selected iteration counts.

\begin{table*}[tb]
  \centering
  \caption{Synthetic experiments at $\eta=0.05$. Entries are median excess test loss over the known population minimizer $f_0$ and median selected iteration across 100 seeds. ScoreStop uses the $\chi^2_d$ calibration with $d=1$ for FWD-SS/BWD-SS and $d=2$ for STAB-SS.}
  \label{tab:synthetic-combined-main}
  \small
  \begin{tabular}{lrrrrrrrr}
    \toprule
     & \multicolumn{2}{c}{\textbf{Regression}} & \multicolumn{2}{c}{\textbf{Classification}} & \multicolumn{2}{c}{\textbf{Ranking}} & \multicolumn{2}{c}{\textbf{Survival}} \\
    \cmidrule(lr){2-3} \cmidrule(lr){4-5} \cmidrule(lr){6-7} \cmidrule(lr){8-9}
    \textbf{Method} & \textbf{Excess} & \textbf{Iter} & \textbf{Excess} & \textbf{Iter} & \textbf{Excess} & \textbf{Iter} & \textbf{Excess} & \textbf{Iter} \\
    \midrule
    FWD-SS, $z=0.025$ & 0.049 & 77 & 0.0636 & 92 & 0.0155 & 99 & 0.0795 & 76 \\
    FWD-SS, $z=0.05$ & 0.0482 & 64 & 0.0633 & 79 & 0.0155 & 82 & 0.0688 & 50 \\
    FWD-SS, $z=0.1$ & 0.0485 & 53 & 0.0636 & 58 & 0.0155 & 70 & 0.0645 & 39 \\
    FWD-SS, $z=0.2$ & 0.0498 & 46 & 0.0672 & 51 & 0.0159 & 64 & 0.0658 & 31 \\
    FWD-SS, $z=0.3$ & 0.0509 & 44 & 0.0679 & 49 & 0.0161 & 58 & 0.0676 & 29 \\
    BWD-SS, $z=0.025$ & 0.051 & 67 & 0.0633 & 90 & 0.0153 & 97 & 0.075 & 62 \\
    BWD-SS, $z=0.05$ & 0.0495 & 57 & 0.0625 & 72 & 0.0155 & 82 & 0.0669 & 44 \\
    BWD-SS, $z=0.1$ & 0.0505 & 47 & 0.0645 & 57 & 0.0155 & 73 & 0.0663 & 35 \\
    BWD-SS, $z=0.2$ & 0.0528 & 43 & 0.0678 & 50 & 0.0156 & 63 & 0.0681 & 28 \\
    BWD-SS, $z=0.3$ & 0.0544 & 40 & 0.0686 & 48 & 0.0157 & 58 & 0.0704 & 26 \\
    STAB-SS, $z=0.025$ & 0.0519 & 95 & 0.0633 & 112 & 0.0154 & 129 & 0.0859 & 82 \\
    STAB-SS, $z=0.05$ & 0.0486 & 67 & 0.0631 & 84 & 0.0152 & 99 & 0.066 & 47 \\
    STAB-SS, $z=0.1$ & 0.0477 & 59 & 0.0637 & 67 & 0.0152 & 83 & 0.0632 & 41 \\
    STAB-SS, $z=0.2$ & 0.0484 & 50 & 0.0645 & 57 & 0.0153 & 73 & 0.0636 & 34 \\
    STAB-SS, $z=0.3$ & 0.0492 & 49 & 0.0656 & 54 & 0.0155 & 68 & 0.0649 & 32 \\
    Patience-1 & 0.0513 & 43 & 0.0687 & 47 & 0.0322 & 3 & 0.0675 & 29 \\
    Patience-3 & 0.0474 & 54 & 0.0627 & 64 & 0.0257 & 5 & 0.0626 & 38 \\
    Patience-5 & 0.0465 & 57 & 0.0615 & 69 & 0.0236 & 9 & 0.0622 & 40 \\
    Patience-20 & 0.0473 & 65 & 0.061 & 91 & 0.0196 & 24 & 0.0617 & 43 \\
    Patience-50 & 0.0473 & 66 & 0.0607 & 98 & 0.0167 & 46 & 0.0617 & 43 \\
    Patience-100 & 0.0474 & 66 & 0.0607 & 99 & 0.0167 & 60 & 0.0617 & 43 \\
    \bottomrule
  \end{tabular}
\end{table*}

\subsection{Null calibration}
\label{app:null-calibration}

We check the finite-sample calibration of the score statistics empirically using the synthetic boosting trajectories of Appendix~\ref{app:dgp}.
For each task we identify a single \emph{trigger iteration} $m^*$, equal to the median, across the $100$ Monte Carlo seeds, of the iteration with the minimum test loss.
This procedure gives a single trigger iteration per task, which is close to the population minimizer $f_0$.
The trigger iterations for each task are: regression ($m^*=63$), classification ($m^*=90$), ranking ($m^*=102$), and survival ($m^*=42$).
We then pool the score statistics from a $\pm 5$ iteration window around $m^*$ across all seeds, giving $11 \times 100 = 1{,}100$ score statistic values per task and ScoreStop variant.
These are plotted in QQ plots against the $\chi^2_d$ reference distribution in Figure~\ref{fig:null-calibration}.
These plots confirm that the distribution of the ScoreStop statistic is well-approximated by the $\chi^2_d$ reference distribution, even in finite samples, when the function $\hat f_m$ is close to the population minimizer $f_0$.

\begin{figure*}[tb]
  \centering
  \includegraphics[width=\textwidth]{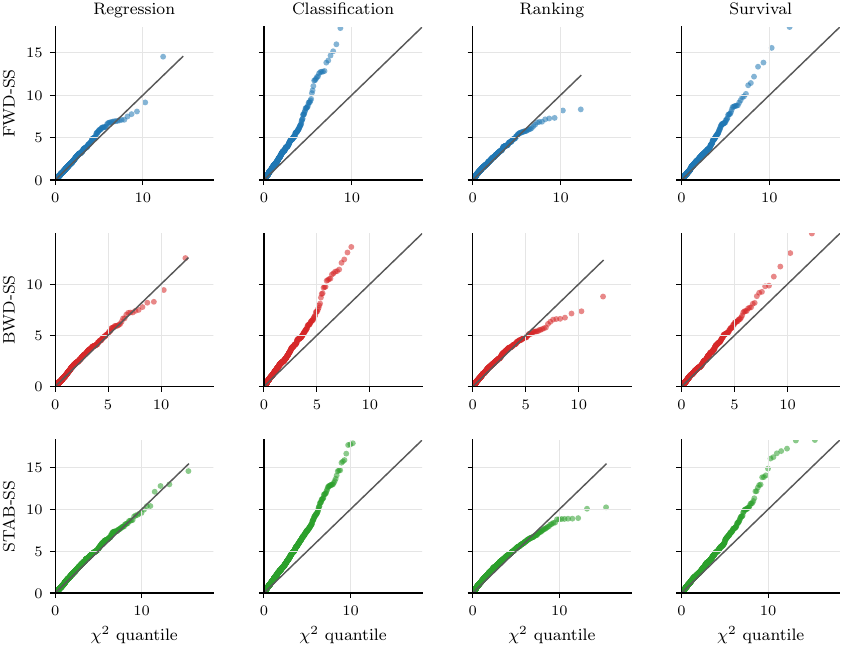}
  \caption{QQ plots of the score statistics in a $\pm 5$ iteration
    window around the median test-loss argmin iteration, for each task.
    That is, each plot contains eleven points for each of the $100$ Monte Carlo seeds.
    The degrees of freedom $d$ of the reference $\chi^2_d$ distribution is given by the ScoreStop variant:
    FWD-SS, BWD-SS ($d=1$) and STAB-SS ($d=2$).
    The diagonal line on each plot represents perfect agreement with the asymptotic null reference.}
  \label{fig:null-calibration}
\end{figure*}

\subsection{Learning rate sensitivity}
\label{app:lr-sensitivity}

Table~\ref{tab:lr-sensitivity} reports the median excess test loss of
FWD-SS, BWD-SS, and STAB-SS at $z \in \{0.05, 0.1\}$, alongside
Patience at $P \in \{5, 20, 50\}$, across
$\eta \in \{0.5, 0.1, 0.05, 0.01\}$ for all four synthetic DGPs.
These results separate learning-rate effects from threshold effects: all methods are less stable at the largest learning rate, while the default $\eta=0.05$ lies in the range where the ScoreStop variants and patience baselines have comparable excess losses.

\begin{table*}[tb]
  \centering
  \caption{Learning-rate sensitivity. Entries are median excess test loss over the known population minimizer $f_0$ across 100 seeds. ScoreStop uses $z \in \{0.05,0.1\}$; patience uses $P \in \{5,20,50\}$.}
  \label{tab:lr-sensitivity}
  \small
  \begin{tabular}{lrrrr}
    \toprule
    \textbf{Method} & \textbf{$\eta = 0.5$} & \textbf{$\eta = 0.1$} & \textbf{$\eta = 0.05$} & \textbf{$\eta = 0.01$} \\
    \midrule
    \multicolumn{5}{l}{\textit{Regression}} \\
    \midrule
    FWD-SS, $z=0.05$ & 0.147 & 0.0537 & 0.0482 & 0.0505 \\
    FWD-SS, $z=0.1$ & 0.115 & 0.0531 & 0.0485 & 0.0531 \\
    BWD-SS, $z=0.05$ & 0.188 & 0.0544 & 0.0495 & 0.0509 \\
    BWD-SS, $z=0.1$ & 0.171 & 0.0524 & 0.0505 & 0.0535 \\
    STAB-SS, $z=0.05$ & 0.187 & 0.0547 & 0.0486 & 0.0486 \\
    STAB-SS, $z=0.1$ & 0.174 & 0.0523 & 0.0477 & 0.0496 \\
    Patience-5 & 0.0852 & 0.0495 & 0.0465 & 0.0467 \\
    Patience-20 & 0.0852 & 0.0503 & 0.0473 & 0.0454 \\
    Patience-50 & 0.0852 & 0.0503 & 0.0473 & 0.0455 \\
    \midrule
    \multicolumn{5}{l}{\textit{Classification}} \\
    \midrule
    FWD-SS, $z=0.05$ & 0.172 & 0.0647 & 0.0633 & 0.0667 \\
    FWD-SS, $z=0.1$ & 0.126 & 0.0633 & 0.0636 & 0.0686 \\
    BWD-SS, $z=0.05$ & 0.35 & 0.0653 & 0.0625 & 0.0674 \\
    BWD-SS, $z=0.1$ & 0.172 & 0.0639 & 0.0645 & 0.0689 \\
    STAB-SS, $z=0.05$ & 0.59 & 0.0649 & 0.0631 & 0.0639 \\
    STAB-SS, $z=0.1$ & 0.233 & 0.0635 & 0.0637 & 0.066 \\
    Patience-5 & 0.0857 & 0.0623 & 0.0615 & 0.0632 \\
    Patience-20 & 0.0859 & 0.0615 & 0.061 & 0.0608 \\
    Patience-50 & 0.0859 & 0.0616 & 0.0607 & 0.0603 \\
    \midrule
    \multicolumn{5}{l}{\textit{Ranking}} \\
    \midrule
    FWD-SS, $z=0.05$ & 0.0257 & 0.0167 & 0.0155 & 0.0152 \\
    FWD-SS, $z=0.1$ & 0.0293 & 0.0164 & 0.0155 & 0.0156 \\
    BWD-SS, $z=0.05$ & 0.0285 & 0.016 & 0.0155 & 0.0156 \\
    BWD-SS, $z=0.1$ & 0.0312 & 0.0163 & 0.0155 & 0.0155 \\
    STAB-SS, $z=0.05$ & 0.0244 & 0.0161 & 0.0152 & 0.0151 \\
    STAB-SS, $z=0.1$ & 0.0278 & 0.0161 & 0.0152 & 0.0153 \\
    Patience-5 & 0.0341 & 0.0216 & 0.0236 & 0.0291 \\
    Patience-20 & 0.0263 & 0.0179 & 0.0196 & 0.0252 \\
    Patience-50 & 0.0232 & 0.0173 & 0.0167 & 0.023 \\
    \midrule
    \multicolumn{5}{l}{\textit{Survival}} \\
    \midrule
    FWD-SS, $z=0.05$ & 0.617 & 0.0804 & 0.0688 & 0.0652 \\
    FWD-SS, $z=0.1$ & 0.419 & 0.0755 & 0.0645 & 0.0688 \\
    BWD-SS, $z=0.05$ & 1.15 & 0.0821 & 0.0669 & 0.0659 \\
    BWD-SS, $z=0.1$ & 0.873 & 0.0767 & 0.0663 & 0.0677 \\
    STAB-SS, $z=0.05$ & 1.83 & 0.0842 & 0.066 & 0.0614 \\
    STAB-SS, $z=0.1$ & 1.14 & 0.0751 & 0.0632 & 0.0638 \\
    Patience-5 & 0.186 & 0.068 & 0.0622 & 0.0597 \\
    Patience-20 & 0.186 & 0.0683 & 0.0617 & 0.0595 \\
    Patience-50 & 0.186 & 0.0683 & 0.0617 & 0.0598 \\
    \bottomrule
  \end{tabular}
\end{table*}

\subsection{Real data numeric results}
\label{app:real-data-results}

Table~\ref{tab:real_data} gives the fold-median values summarized in Figure~\ref{fig:real-data-summary}.

\begin{table*}[!t]
  \centering
  \caption{Real-data benchmarks. \textbf{Oracle} is the median minimum test loss along the fitted trajectory; method columns report median excess over that oracle, so lower is better. ScoreStop uses $z=0.05$ throughout. The ranking loss is $1-\mathrm{NDCG}@10$.}
  \label{tab:real_data}
  \footnotesize
  \begin{tabular}{llrrrrrrr}
    \toprule
    \textbf{Dataset} & \textbf{Task} & $n$ & \textbf{Oracle} & \textbf{FWD-SS} & \textbf{BWD-SS} & \textbf{STAB-SS} & \textbf{Pat-5} & \textbf{Pat-20} \\
    \midrule
    California Housing & Regression & 20,640 & 0.440 & 0.0234 & 0.0208 & 0.0174 & 0.0122 & 0.0038 \\
    Abalone & Regression & 4,177 & 2.130 & 0.0102 & 0.0049 & 0.0060 & 0.0075 & 0.0075 \\
    CPU Activity & Regression & 8,192 & 2.246 & 0.0517 & 0.0690 & 0.0363 & 0.0296 & 0.0127 \\
    Adult Income & Classification & 48,842 & 0.278 & 0.0026 & 0.0025 & 0.0014 & 0.0012 & 0.0003 \\
    Phoneme & Classification & 5,404 & 0.254 & 0.0046 & 0.0172 & 0.0082 & 0.0058 & 0.0042 \\
    German Credit & Classification & 1,000 & 0.498 & 0.0106 & 0.0096 & 0.0105 & 0.0051 & 0.0085 \\
    freMTPL2 & Poisson & 678,013 & 0.192 & 0.0007 & 0.0008 & 0.0006 & 0.0005 & 0.0002 \\
    Bike Sharing & Poisson & 17,379 & -878.646 & 0.4614 & 0.5026 & 0.2621 & 0.2803 & 0.1055 \\
    Covertype & Multiclass (7) & 581,012 & 0.098 & 0.0317 & 0.0364 & 0.0183 & 0.0119 & 0.0023 \\
    MEPS & Quantile ($\tau{=}0.9$) & 28,336 & 3497.428 & 5.3082 & 3.7547 & 4.6194 & 3.6818 & 3.1127 \\
    MSLR-WEB10K & Ranking & 10,000 & 0.494 & 0.0052 & 0.0056 & 0.0043 & 0.0092 & 0.0048 \\
    FLCHAIN & Survival (Cox) & 4,000 & 5.069 & 0.0130 & 0.0412 & 0.0205 & 0.0104 & 0.0133 \\
    SUPPORT & Survival (Cox) & 9,105 & 5.073 & 0.0082 & 0.0143 & 0.0064 & 0.0093 & 0.0051 \\
    \bottomrule
  \end{tabular}
\end{table*}

\section{Proofs of Theoretical Results}
\label{app:proofs}

We prove the multi-direction generalizations of Propositions~\ref{prop:null} and~\ref{prop:power}, from which the single-direction results follow as
special cases.
Throughout, we work with a fixed direction set $\mathbf{h} = (h_1, \ldots, h_d) \in \mathcal H^d$ with $d \ge 1$, and we write the vectors
\begin{align*}
    \boldsymbol\varphi_0(\mathbf{h}, f) &=
        \big(\varphi_0(h_1, f), \ldots, \varphi_0(h_d, f)\big), \\
    \hat{\boldsymbol\varphi}(\mathbf{h}, f) &=
        \big(\hat\varphi(h_1, f), \ldots, \hat\varphi(h_d, f)\big).
\end{align*}
The multi-direction ScoreStop statistic is
\begin{align*}
    T_n(\mathbf{h}, f_*) = n\,
    \Env[\hat{\boldsymbol\varphi}(\mathbf{h}, f_*)]^\top
    \hat{\Sigma}_n(\mathbf{h}, f_*)^{-1}
    \Env[\hat{\boldsymbol\varphi}(\mathbf{h}, f_*)],
\end{align*}
where we define the covariance estimator
\begin{align*}
    \hat{\Sigma}_n(\mathbf{h}, f_*) = \Env[\hat{\boldsymbol\varphi}(\mathbf{h}, f_*)\hat{\boldsymbol\varphi}(\mathbf{h}, f_*)^\top].
\end{align*}
This expression recovers \eqref{eq:tstat_if} when $d=1$.
We note that $T_n(\mathbf{h}, f_*)$ is closely related to the $t^2$ statistic of \citet{hotelling_generalization_1931}, which is a multivariate generalization of Student's $t^2$ statistic.

\begin{definition}[Hotelling's $t^2$ statistic]
    Suppose that one has access to $n$ iid observations of a random vector $X$, with mean $\mu$.
    Hotelling's $t^2$ statistic is $t_n = n(\bar{x}-\mu)^\top\Sigma_n^{-1}(\bar{x}-\mu)$, where we define the sample mean $\bar{x} = \Env[X]$ and the sample covariance
    \begin{align*}
        \Sigma_n = \frac{n}{n-1}\Env[(X-\bar{x})(X-\bar{x})^\top].
    \end{align*}
\end{definition}

Our statistic can be viewed as a Hotelling's statistic with a mean value $\mu = 0$ and using a covariance estimator $\hat{\Sigma}_n(\mathbf{h}, f_*)$ that is centered on $\mu = 0$ rather than the sample mean $\Env[\hat{\boldsymbol\varphi}(\mathbf{h}, f_*)]$. Additionally, our analysis must account for the fact that we average over estimated influence functions $\hat{\boldsymbol\varphi}(\mathbf{h}, f_*)$ rather than the true influence functions $\boldsymbol\varphi_0(\mathbf{h}, f_*)$, which is an important distinction only when the loss is data dependent.

\subsection{Multi-direction assumptions}

We assume the multi-direction analogs of Assumptions~\ref{assump:central}, \ref{assump:direction}, \ref{assump:variance}, and~\ref{assump:ral}:

\begin{enumerate}
    \item[(A1)] $\E[\boldsymbol\varphi_0(\mathbf{h}, f_0)] = \mathbf{0}$.
    \item[(A2)] The map $h \mapsto \E[\varphi_0(h, f_*)]$ is a
        continuous linear functional with Riesz representer $h^*(f_*)$.
    \item[(A3)] The matrix $\Sigma(\mathbf{h}, f_0)$ is finite and invertible, where $\Sigma(\mathbf{h}, f) = \E[\boldsymbol\varphi_0(\mathbf{h}, f)
        \boldsymbol\varphi_0(\mathbf{h}, f)^\top]$.
    \item[(A4)] Regular asymptotic linearity, $\sqrt n\, \Env[\hat{\boldsymbol\varphi}(\mathbf{h}, f_0) -
        \boldsymbol\varphi_0(\mathbf{h}, f_0)] = o_p(1)$.
    \item[(A5)] First-order consistency: $\Env[\hat{\boldsymbol\varphi}(\mathbf{h}, f_*) - \boldsymbol\varphi_0(\mathbf{h}, f_*)] = o_p(1)$.
    \item[(A6)] Second-order consistency: $\Env[\hat{\boldsymbol\varphi}(\mathbf{h}, f_*)\hat{\boldsymbol\varphi}(\mathbf{h}, f_*)^\top - \boldsymbol\varphi_0(\mathbf{h}, f_*)
        \boldsymbol\varphi_0(\mathbf{h}, f_*)^\top] = o_p(1)$.
\end{enumerate}

\subsection{Behavior under the null}

To analyze the ScoreStop statistic, we define the centered, normalized vector
\begin{align*}
    Z_n(\mathbf{h}, f_*) = \sqrt n\,
    \big\{\Env[\hat{\boldsymbol\varphi}(\mathbf{h}, f_*)] -
        \E[\boldsymbol\varphi_0(\mathbf{h}, f_*)]\big\},
\end{align*}
and hence,
\begin{align}
    T_n(\mathbf{h}, f_*) &=
    \left\{Z_n(\mathbf{h}, f_*) + \sqrt n\,\E[\boldsymbol\varphi_0(\mathbf{h}, f_*)]\right\}^\top
    \hat\Sigma_n(\mathbf{h}, f_*)^{-1}
    \left\{Z_n(\mathbf{h}, f_*) + \sqrt n\,\E[\boldsymbol\varphi_0(\mathbf{h}, f_*)]\right\}.
    \label{eq:hotelling-decomp}
\end{align}
We first prove two lemmas on the asymptotic normality of $Z_n(\mathbf{h}, f_*)$ and the estimation of its covariance.

\begin{lemma}[CLT]\label{lem:joint-clt}
Under (A1), (A3) and (A4), and under the null $H_0: f_* = f_0$,
\begin{align*}
    Z_n(\mathbf{h}, f_*) \overset{d}{\to}
    \mathcal N_d\big(\mathbf{0}, \Sigma(\mathbf{h}, f_0)\big)
    \quad \text{as } n \to \infty.
\end{align*}
\end{lemma}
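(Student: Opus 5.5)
Under the null $f_*=f_0$, (A1) gives $\E[\boldsymbol\varphi_0(\mathbf{h}, f_0)]=\mathbf{0}$, so the centering term in $Z_n$ vanishes and
\begin{align*}
    Z_n(\mathbf{h}, f_0) = \sqrt n\,\Env[\boldsymbol\varphi_0(\mathbf{h}, f_0)] + \sqrt n\,\Env[\hat{\boldsymbol\varphi}(\mathbf{h}, f_0) - \boldsymbol\varphi_0(\mathbf{h}, f_0)].
\end{align*}
The plan is to treat the two terms separately and then combine them with Slutsky's lemma.

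\emph{First term.} The validation observations $W_1,\ldots,W_n$ are iid. Because $\mathbf{h}$ is fixed and $\varphi_0$ is evaluated at the true $P_0$, the vectors $\boldsymbol\varphi_0(\mathbf{h}, f_0)(W_i)$ are iid random vectors in $\mathbb{R}^d$. By (A1) their mean is zero. By (A3) their second-moment matrix $\Sigma(\mathbf{h}, f_0)$ is finite, and it equals the covariance because the mean is zero. The multivariate Lindeberg--L\'evy central limit theorem, applied for example via the Cram\'er--Wold device to each scalar projection $a^\top\boldsymbol\varphi_0$ as in \eqref{eq:clt_if}, then gives
\begin{align*}
    \sqrt n\,\Env[\boldsymbol\varphi_0(\mathbf{h}, f_0)] \overset{d}{\to} \mathcal N_d\big(\mathbf{0}, \Sigma(\mathbf{h}, f_0)\big).
\end{align*}
Invertibility in (A3) is not needed here; it only ensures the limit is nondegenerate, which matters later for the $\chi^2_d$ result.

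\emph{Second term.} This is exactly the quantity that (A4) declares to be $o_p(1)$.

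\emph{Combination.} Slutsky's lemma, adding an $o_p(1)$ vector to a sequence converging in distribution, yields the claim.

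The only potentially delicate point is that $\hat{\boldsymbol\varphi}$ depends on $\hat P_n$, which may be built from the same validation sample, so the summands of the first expression are not iid. The decomposition avoids this problem: all dependence on $\hat P_n$ is isolated in the remainder, which is controlled entirely by assumption (A4), while the CLT is applied only to the oracle influence functions, which are iid. I would also note that the lemma needs no independence between the training sample and $\mathbf{h}$ beyond treating $\mathbf{h}$ as fixed, that is, conditioning on the training data, which is independent of the validation data.
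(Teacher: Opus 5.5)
Your proposal is correct and follows the paper's proof essentially verbatim: the same split of $Z_n$ into an oracle empirical average handled by the multivariate CLT (mean zero by (A1), finite covariance by (A3)) plus a remainder that is $o_p(1)$ by (A4), combined via Slutsky's theorem. Your side remarks are accurate refinements that leave the argument unchanged: the Cram\'er--Wold device, the observation that invertibility in (A3) is not needed at this stage, and isolating all $\hat P_n$-dependence in the remainder.
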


\begin{proof}
Omitting the input arguments $(\mathbf{h}, f_*)$ we decompose
\begin{align*}
    Z_n(\mathbf{h}, f_*) &=
    \sqrt n\,\big\{\Env[\hat{\boldsymbol\varphi}] -
        \Env[\boldsymbol\varphi_0]\big\}
    + \sqrt n\,\big\{\Env[\boldsymbol\varphi_0] -
        \E[\boldsymbol\varphi_0]\big\}.
\end{align*}
The first term is $o_p(1)$ by (A4).
The second term converges in distribution to $\mathcal N_d(\mathbf{0}, \Sigma(\mathbf{h}, f_0))$ by the multivariate CLT, centered on $\bm{0}$ by (A1), and with finite covariance matrix by (A3).
Slutsky's theorem allows for the separate limit results for the first and second terms to be added.
\end{proof}

\begin{lemma}[Precision consistency]\label{lem:cov-cons}
Assume (A6). If $\Sigma(\mathbf{h}, f_*)$ is finite and invertible, then $\hat\Sigma(\mathbf{h}, f_*)^{-1} - \Sigma(\mathbf{h}, f_*)^{-1} = o_p(1)$.
\end{lemma}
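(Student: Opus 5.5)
The plan is to combine a law of large numbers for the uncentered second-moment matrix with continuity of matrix inversion at an invertible point (continuous mapping theorem). Write $\Sigma_0 = \Sigma(\mathbf{h}, f_*) = \E[\boldsymbol\varphi_0\boldsymbol\varphi_0^\top]$ and split
\begin{align*}
    \hat\Sigma_n(\mathbf{h}, f_*) - \Sigma_0
    = \Env[\hat{\boldsymbol\varphi}\hat{\boldsymbol\varphi}^\top - \boldsymbol\varphi_0\boldsymbol\varphi_0^\top]
    + \big\{\Env[\boldsymbol\varphi_0\boldsymbol\varphi_0^\top] - \E[\boldsymbol\varphi_0\boldsymbol\varphi_0^\top]\big\}.
\end{align*}
The first term is $o_p(1)$ by (A6). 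For the second term, each entry is an average of the iid variables $\varphi_0(h_j,f_*)\varphi_0(h_k,f_*)$. These are integrable, since $|\varphi_0(h_j)\varphi_0(h_k)| \le \tfrac12\{\varphi_0(h_j)^2 + \varphi_0(h_k)^2\}$ and the diagonal of $\Sigma_0$ is finite by hypothesis. The weak law of large numbers then gives entrywise convergence in probability to zero. Since $d$ is fixed, we obtain $\hat\Sigma_n \to \Sigma_0$ in probability in any matrix norm.

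Next, the map $A \mapsto A^{-1}$ is continuous on the open set of invertible $d\times d$ matrices, because entries of $A^{-1}$ are rational functions of the entries of $A$ with denominator $\det A$. Since $\det \Sigma_0 \neq 0$ and $\det$ is continuous, $P(\det\hat\Sigma_n \neq 0) \to 1$, so $\hat\Sigma_n^{-1}$ is well defined with probability tending to one. On the complementary event we may set the inverse arbitrarily without affecting convergence in probability. The continuous mapping theorem then yields $\hat\Sigma_n^{-1} \to \Sigma_0^{-1}$ in probability, i.e.\ $\hat\Sigma_n^{-1} - \Sigma_0^{-1} = o_p(1)$.

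If an explicit argument is preferred over the continuous mapping theorem, one can use the identity $\hat\Sigma_n^{-1} - \Sigma_0^{-1} = -\hat\Sigma_n^{-1}(\hat\Sigma_n - \Sigma_0)\Sigma_0^{-1}$. Once $\|\Sigma_0^{-1}\|\,\|\hat\Sigma_n - \Sigma_0\| \le 1/2$, a Neumann series bound gives $\|\hat\Sigma_n^{-1}\| \le 2\|\Sigma_0^{-1}\|$, and this event has probability tending to one. The main point of care, rather than a real obstacle, is the well-definedness of $\hat\Sigma_n^{-1}$ on a set of vanishing probability. The only other subtlety is that the LLN step requires finite second moments of $\varphi_0(h_j,f_*)$, which is implied by the finiteness of $\Sigma(\mathbf{h}, f_*)$ assumed in the statement.
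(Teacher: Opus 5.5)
Your proposal is correct and follows the same route as the paper's proof. Both split $\hat\Sigma_n - \Sigma$ into an (A6) term and a law-of-large-numbers term, then apply the continuous mapping theorem to matrix inversion. Your extra details fill gaps the paper leaves implicit: you get integrability from the finiteness of $\Sigma(\mathbf{h}, f_*)$ itself rather than from (A3), which concerns $f_0$, and you handle the well-definedness of $\hat\Sigma_n^{-1}$ on an event of probability tending to one.
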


\begin{proof}
Omitting the input arguments $(\mathbf{h}, f_*)$ we write
\begin{align*}
    \hat\Sigma_n(\mathbf{h}, f_*) - \Sigma(\mathbf{h}, f_*) &=
    \left\{\Env[\hat{\boldsymbol\varphi}\hat{\boldsymbol\varphi}^\top] -
        \Env[\boldsymbol\varphi_0\boldsymbol\varphi_0^\top]\right\}  + \left\{\Env[\boldsymbol\varphi_0\boldsymbol\varphi_0^\top] -
        \E[\boldsymbol\varphi_0\boldsymbol\varphi_0^\top]\right\}.
\end{align*}
The first term is $o_p(1)$ by (A6) and the second
term is $o_p(1)$ by the law of large numbers, using (A3). These are combined to give $\hat\Sigma_n - \Sigma = o_p(1)$. The result follows by the continuous mapping theorem applied to matrix inversion.
\end{proof}

Using these Lemmas, we prove the multi-direction analog of Proposition~\ref{prop:null}, with the single-direction result in Proposition~\ref{prop:null} following when $d = 1$.

\begin{proposition}[Multi-direction null distribution]\label{prop:null-multi}
Under (A1)--(A6), and the null hypothesis $H_0: f_* = f_0$, $T_n(\mathbf{h}, f_*)
\overset{d}{\to} \chi^2_d$ as $n \to \infty$.
\end{proposition}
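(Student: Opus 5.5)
The plan is to start from the decomposition \eqref{eq:hotelling-decomp}. Under the null $H_0: f_* = f_0$, assumption (A1) gives $\E[\boldsymbol\varphi_0(\mathbf{h}, f_*)] = \mathbf{0}$. The drift term $\sqrt n\,\E[\boldsymbol\varphi_0(\mathbf{h}, f_*)]$ therefore vanishes identically. The statistic reduces to the quadratic form
\begin{align*}
    T_n(\mathbf{h}, f_0) = Z_n(\mathbf{h}, f_0)^\top \hat\Sigma_n(\mathbf{h}, f_0)^{-1} Z_n(\mathbf{h}, f_0).
\end{align*}
One caveat is that $\hat\Sigma_n$ may fail to be invertible for finite $n$. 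I would handle this by working on the event where it is invertible, whose probability tends to one; this follows from the consistency established in the next step.

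The two ingredients are then supplied by the lemmas. Lemma~\ref{lem:joint-clt}, which uses (A1), (A3) and (A4), gives $Z_n(\mathbf{h}, f_0) \overset{d}{\to} Z \sim \mathcal N_d(\mathbf{0}, \Sigma)$ with $\Sigma = \Sigma(\mathbf{h}, f_0)$. Lemma~\ref{lem:cov-cons} applies with $f_* = f_0$, since (A3) guarantees that $\Sigma(\mathbf{h}, f_0)$ is finite and invertible, and (A6) is assumed. It gives $\hat\Sigma_n^{-1} \overset{p}{\to} \Sigma^{-1}$. Because the limit $\Sigma^{-1}$ is a constant matrix, Slutsky's theorem gives joint convergence $(Z_n, \hat\Sigma_n^{-1}) \overset{d}{\to} (Z, \Sigma^{-1})$. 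The map $(z, A) \mapsto z^\top A z$ is continuous, so the continuous mapping theorem yields $T_n \overset{d}{\to} Z^\top \Sigma^{-1} Z$.

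The last step identifies the limit law. Let $\Sigma^{1/2}$ be the symmetric positive definite square root of $\Sigma$, which exists by (A3). Set $U = \Sigma^{-1/2} Z \sim \mathcal N_d(\mathbf{0}, I_d)$. Then $Z^\top \Sigma^{-1} Z = U^\top U$, a sum of $d$ squared independent standard normals, and this is $\chi^2_d$. Assumptions (A2) and (A5) are not needed under the null; they matter only for the alternative.

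I expect no serious obstacle, since the lemmas carry the analytic content. The point needing most care is the justification for joint convergence: Slutsky applies only because the second component converges in probability to a constant. A second point is the finite-sample non-invertibility of $\hat\Sigma_n$. This is dealt with by defining $T_n$ arbitrarily off an event of vanishing probability, which does not affect the distributional limit.
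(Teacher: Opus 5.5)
Your proposal is correct and follows essentially the same route as the paper: use (A1) to remove the drift term in \eqref{eq:hotelling-decomp}, then combine Lemma~\ref{lem:joint-clt} and Lemma~\ref{lem:cov-cons}. The only difference is packaging. The paper writes $T_n = Z_n^\top\Sigma^{-1}Z_n + Z_n^\top(\hat\Sigma_n^{-1}-\Sigma^{-1})Z_n$ and disposes of the remainder as $O_p(1)\cdot o_p(1)$, whereas you use joint Slutsky convergence of $(Z_n,\hat\Sigma_n^{-1})$ plus continuous mapping, and you also spell out the $\Sigma^{-1/2}$ identification of the $\chi^2_d$ law and the finite-sample invertibility of $\hat\Sigma_n$, both of which the paper leaves implicit.
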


\begin{proof}
Under the null hypothesis $H_0: f_*=f_0$ then $\E[\boldsymbol\varphi_0(\mathbf{h}, f_*)] = \bm{0}$ by (A1), and \eqref{eq:hotelling-decomp} reduces to
\begin{align*}
    T_n &= Z_n^\top \hat\Sigma_n^{-1} Z_n \\
    &=Z_n^\top \Sigma^{-1} Z_n + Z_n^\top \{\hat\Sigma_n^{-1} - \Sigma^{-1}\} Z_n
\end{align*}
By Lemma~\ref{lem:joint-clt}, $Z_n \overset{d}{\to} \mathcal N_d(\mathbf{0}, \Sigma)$, and hence, applying (A3) and the continuous mapping theorem, the first term has asymptotic distribution $Z_n^\top \Sigma^{-1} Z_n \overset{d}{\to} \chi^2_d$.
For the second term, $Z_n = O_p(1)$ by tightness of the convergent sequence,
and $\hat\Sigma_n^{-1} - \Sigma^{-1} = o_p(1)$ by Lemma~\ref{lem:cov-cons}, and assuming (A3).
Then the second term $Z_n^\top \{\hat\Sigma_n^{-1} - \Sigma^{-1}\} Z_n = o_p(1)$. Combining the results for the first and second terms completes the proof.
\end{proof}

\subsection{Behavior under the alternative}

The following proposition shows that tests based on the ScoreStop statistic have the power to test against certain alternative hypotheses that depend on the direction of the test.
Proposition~\ref{prop:power} follows from the case $d = 1$.

\begin{proposition}[Multi-direction power consistency]\label{prop:power-multi}
Fix $\mathbf{h} \in \mathcal H^d$ and assume (A2), (A5) and (A6).
If $\inner{h_j}{h^*(f_*)} \neq 0$ for at least one direction $j$, and $\Sigma(\mathbf{h}, f_*)$ is finite and invertible, then under the alternative hypothesis $H_1: f_* \ne f_0$, $P\{T_n(\mathbf{h}, f_*) > c_\alpha\} \to 1$ for any
fixed $c_\alpha$.
\end{proposition}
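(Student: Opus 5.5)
The plan is to show that $T_n(\mathbf{h}, f_*)/n$ converges in probability to a strictly positive constant. Then $T_n$ diverges linearly in $n$, and the probability of exceeding any fixed $c_\alpha$ tends to one. Write $\boldsymbol\mu = \E[\boldsymbol\varphi_0(\mathbf{h}, f_*)]$. By (A2), its $j$th component is $\mu_j = \inner{h_j}{h^*(f_*)}$. By hypothesis, at least one component is nonzero, so $\boldsymbol\mu \neq \mathbf{0}$.

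First I will establish consistency of the normalized mean. Decompose
\begin{align*}
    \Env[\hat{\boldsymbol\varphi}(\mathbf{h}, f_*)] - \boldsymbol\mu = \left\{\Env[\hat{\boldsymbol\varphi}] - \Env[\boldsymbol\varphi_0]\right\} + \left\{\Env[\boldsymbol\varphi_0] - \E[\boldsymbol\varphi_0]\right\}.
\end{align*}
The first term is $o_p(1)$ by (A5). The second is $o_p(1)$ by the weak law of large numbers. This step needs $\E\|\boldsymbol\varphi_0(\mathbf{h}, f_*)\| < \infty$, which follows from finiteness of $\Sigma(\mathbf{h}, f_*)$: finite second moments give finite first moments. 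Hence $\Env[\hat{\boldsymbol\varphi}(\mathbf{h}, f_*)] \to \boldsymbol\mu$ in probability.

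Second, Lemma~\ref{lem:cov-cons} applies directly under (A6), because $\Sigma(\mathbf{h}, f_*)$ is assumed finite and invertible. It gives $\hat\Sigma_n(\mathbf{h}, f_*)^{-1} \to \Sigma(\mathbf{h}, f_*)^{-1}$ in probability. A minor point is that $\hat\Sigma_n$ is only invertible with probability tending to one, but this does not affect the conclusion in probability. The continuous mapping theorem applied to $(v, A) \mapsto v^\top A v$ then yields
\begin{align*}
    \frac{T_n(\mathbf{h}, f_*)}{n} = \Env[\hat{\boldsymbol\varphi}]^\top \hat\Sigma_n^{-1} \Env[\hat{\boldsymbol\varphi}] \overset{p}{\to} \boldsymbol\mu^\top \Sigma(\mathbf{h}, f_*)^{-1} \boldsymbol\mu =: \kappa.
\end{align*}
Since $\Sigma(\mathbf{h}, f_*)^{-1}$ is positive definite (it is a second-moment matrix, invertible by assumption) and $\boldsymbol\mu \neq \mathbf{0}$, we get $\kappa > 0$.

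Finally, for $n > 2c_\alpha/\kappa$ we have $\{T_n > c_\alpha\} \supseteq \{T_n/n > \kappa/2\}$. The probability of the latter event tends to one by convergence in probability, which completes the argument. Note that (A1), (A4) and a CLT are not needed; only first-order laws of large numbers are used.

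The only delicate step is the first one, because the estimated influence function $\hat{\boldsymbol\varphi}$ is evaluated at $f_* \neq f_0$. This is exactly where (A5) replaces the regular-asymptotic-linearity assumption (A4), which only holds at $f_0$. It should also be checked that the integrability for the law of large numbers comes from finiteness of $\Sigma(\mathbf{h}, f_*)$ and not from (A3), which concerns $f_0$.
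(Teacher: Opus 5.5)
Your proposal is correct and follows the paper's proof essentially step for step: you show $T_n/n \overset{p}{\to} \mu^\top \Sigma(\mathbf{h}, f_*)^{-1}\mu > 0$ using (A5) plus the law of large numbers for the mean, and Lemma~\ref{lem:cov-cons} for the precision matrix. The only cosmetic difference is that the paper splits off the $(\hat\Sigma_n^{-1} - \Sigma^{-1})$ term explicitly, whereas you apply the continuous mapping theorem directly to the quadratic form. Your remark that the integrability should come from finiteness of $\Sigma(\mathbf{h}, f_*)$ rather than (A3) is a correct tightening of the paper's citation in the proof of Lemma~\ref{lem:cov-cons}.
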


\begin{proof}
Let $\mu = \E[\boldsymbol\varphi_0(\mathbf{h}, f_*)]$, and note that, by (A2), the vector $\mu$ has $j$th component $\mu_j = \inner{h_j}{h^*(f_*)}$. Hence, $\mu \neq 0$.
From~\eqref{eq:hotelling-decomp},
\begin{align*}
    \frac{1}{n} T_n(\mathbf{h}, f_*) &=
    \big\{n^{-1/2} Z_n + \mu\big\}^\top
    \hat\Sigma_n^{-1}
    \big\{n^{-1/2} Z_n + \mu\big\} \\
    &= \big\{n^{-1/2} Z_n + \mu\big\}^\top
    \Sigma^{-1}
    \big\{n^{-1/2} Z_n + \mu\big\} + \big\{n^{-1/2} Z_n + \mu\big\}^\top
    \{\hat\Sigma_n^{-1} - \Sigma^{-1}\}
    \big\{n^{-1/2} Z_n + \mu\big\}.
\end{align*}
Note that
\begin{align*}
    n^{-1/2} Z_n =     \left\{\Env[\hat{\boldsymbol\varphi}] -
        \Env[\boldsymbol\varphi_0]\right\}
    + \left\{\Env[\boldsymbol\varphi_0] -
        \E[\boldsymbol\varphi_0]\right\}.
\end{align*}
where the first term is $o_p(1)$ by (A5) and the second is $o_p(1)$ by the law of large numbers.
Next, note that $\hat\Sigma_n - \Sigma = o_p(1)$ by Lemma~\ref{lem:cov-cons}.
Therefore,
\begin{align*}
    \frac{1}{n} T_n(\mathbf{h}, f_*) \overset{p}{\to}
    \mu^\top \Sigma(\mathbf{h}, f_*)^{-1} \mu > 0,
\end{align*}
where positivity follows from $\mu \neq 0$ and positive-definiteness of
$\Sigma^{-1}$. Hence $T_n(\mathbf{h}, f_*) \overset{p}{\to} \infty$, and
$P\{T_n > c_\alpha\} \to 1$ for any fixed $c_\alpha$.
\end{proof}

\begin{proof}[Proof of Corollary~\ref{corol:opt}]
The Cauchy--Schwarz inequality gives $|\inner{h}{h^*(f_*)}| \leq \|h\|_2\,\|h^*(f_*)\|_2$, with equality iff $h \propto h^*(f_*)$. When $h = h^*(f_*)$ and $h^*(f_*) \ne 0$, the condition $\langle h, h^*(f_*)\rangle = \|h^*(f_*)\|^2 \ne 0$ of Proposition~\ref{prop:power} is satisfied.
\end{proof}

\section{LambdaRank gradients and the NDCG target loss}
\label{app:lambdarank}

This appendix gives explicit expressions for the LambdaRank gradients
$\lambda_k$ and the underlying nonsmooth target loss based on NDCG.
We follow the formulation of \citet{burges_ranknet_2010}.

\subsection{The NDCG target loss}

For a query $W = (N, Y, X)$ with document scores
$\underline{f}_N(X) = (f(X_1), \ldots, f(X_N))$, let $\pi_f$ denote the
permutation of $\{1, \ldots, N\}$ that sorts the documents by score in
decreasing order, so that $f(X_{\pi_f(1)}) \geq \cdots \geq f(X_{\pi_f(N)})$.
The Discounted Cumulative Gain at truncation level $T \leq N$ is
\begin{align*}
    \mathrm{DCG}_T\{\underline{f}_N(X), W\}
    = \sum_{r=1}^{T} \frac{2^{Y_{\pi_f(r)}} - 1}{\log_2(1 + r)},
\end{align*}
and the Normalized DCG is
\begin{align*}
    \mathrm{NDCG}_T\{\underline{f}_N(X), W\}
    = \frac{\mathrm{DCG}_T\{\underline{f}_N(X), W\}}{\mathrm{DCG}_T^\star(W)},
\end{align*}
where $\mathrm{DCG}_T^\star(W)$ is the maximum DCG attainable for the query, obtained by sorting documents by their labels $Y_k$ rather than their scores.
The target nonsmooth loss is
\begin{align*}
    L^\star\{\underline{f}_N(X), W\}
    = 1 - \mathrm{NDCG}_T\{\underline{f}_N(X), W\}
    \in [0, 1].
\end{align*}
Because $\mathrm{NDCG}_T$ depends on $\underline{f}_N(X)$ only through the permutation $\pi_f$, the loss $L^\star$ is piecewise constant in the document scores, with jumps along the hyperplanes $\{f(X_i) = f(X_j)\}$ where the sort order changes.

\subsection{LambdaRank gradients}

Let $\mathcal{I}(W) = \{(i,j) : Y_i > Y_j\}$ denote the set of ordered document pairs for which document $i$ is more relevant than document $j$.
For a pair $(i,j) \in \mathcal{I}(W)$, define
\begin{align*}
    |\Delta \mathrm{NDCG}_{ij}\{\underline{f}_N(X), W\}|
    = \big|\mathrm{NDCG}_T\{\underline{f}_N(X), W\}
    - \mathrm{NDCG}_T\{\underline{f}_N(X)^{(ij)}, W\}\big|,
\end{align*}
where $\underline{f}_N(X)^{(ij)}$ denotes the score vector with the ranks of documents $i$ and $j$ swapped, and all other ranks fixed.
The LambdaRank pairwise contribution is
\begin{align*}
    \lambda_{ij}\{\underline{f}_N(X), W\}
    = \frac{-\sigma}{1 + \exp\{\sigma(f(X_i) - f(X_j))\}}
    \, |\Delta \mathrm{NDCG}_{ij}\{\underline{f}_N(X), W\}|,
\end{align*}
where $\sigma > 0$ is a shape parameter, typically $\sigma = 1$.
The LambdaRank gradient for document $k$ is then
\begin{align}
    \lambda_k\{\underline{f}_N(X), W\}
    = \sum_{j : (k,j) \in \mathcal{I}(W)}
      \lambda_{kj}\{\underline{f}_N(X), W\}
    - \sum_{j : (j,k) \in \mathcal{I}(W)}
      \lambda_{jk}\{\underline{f}_N(X), W\}.
    \label{eq:lambdarank_gradient}
\end{align}
The first sum collects contributions from pairs where document $k$ should be ranked above another document, and the second from pairs where it should be ranked below.

\subsection{Implicit loss and the Poincaré condition}

Fix a score vector $\underline{f}_N(X)$ with no ties, i.e.,
$f(X_i) \neq f(X_j)$ for all $i \neq j$. On a neighborhood
of such a point, the sort permutation $\pi_f$ is constant, hence the $|\Delta \mathrm{NDCG}_{ij}|$ values are constant. The function
\begin{align*}
    L\{\underline{f}_N(X), W\}
    = \sum_{(i,j) \in \mathcal{I}(W)}
      |\Delta \mathrm{NDCG}_{ij}\{\underline{f}_N(X), W\}|
      \log\big\{1 + \exp\{-\sigma(f(X_i) - f(X_j))\}\big\}
\end{align*}
is differentiable on this neighborhood and satisfies
$\partial L / \partial f(X_k) = \lambda_k\{\underline{f}_N(X), W\}$ for each $k$.
The Poincaré symmetry condition therefore holds for any
no-tie configuration, and there is a local implicit loss $L$ for the LambdaRank gradients.
Globally, $L$ is piecewise smooth, with the $|\Delta \mathrm{NDCG}_{ij}|$ coefficients changing across the tie hyperplanes.

\section{Survival analysis derivations}\label{app:survival}

This appendix gives explicit expressions related to the ScoreStop statistic for the Cox proportional hazards model.

\subsection{Cox Model}

We assume that the observed data $W = (D, T, X) \sim P_0$ is constructed from latent observations of $(T^*, C, X)$, where $T^* \in \mathbb{R}_+$ is the failure time that would be observed if there was no censoring, and $C \in \mathbb{R}_+$ is a random censoring time. The Cox model assumes non-informative censoring, in the sense that $C$ is conditionally independent of $T^*$ given $X$. The observed data is constructed as $T = \min(T^*, C)$ and $D = \mathbf{1}\{T^* \leq C\}$.

Under this assumption, the likelihood for a single observation $(\delta, t, x)$ is
\begin{align}
    p_{T^*|X}(t|x)^\delta S_{T^*|X}(t|x)^{1-\delta} \cdot p_{C|X}(t|x)^{1-\delta} S_{C|X}(t|x)^{\delta} \cdot p_X(x) \label{eq:cox_likelihood}
\end{align}
where $p$ denotes the density under $P_0$ and $S$ denote the survival function, e.g. $S_{T^*|X}(t|x) = P(T^*>t|X=x)$. The cumulative hazard is related to the survival function as $\Lambda(t) = -\log\{S(t)\}$, and since the density $p(t) = \mathrm{d}\{1 - S(t)\} / \mathrm{d}t$ is the derivative of the cumulative distribution function, $p(t) = \lambda(t) S(t)$, where $\lambda(t) = \mathrm{d}\Lambda(t) / \mathrm{d}t$ is called the hazard. Hence, the likelihood in \eqref{eq:cox_likelihood} is
\begin{align*}
    \lambda_{T^*|X}(t|x)^\delta S_{T^*|X}(t|x) \cdot \lambda_{C|X}(t|x)^{1-\delta} S_{C|X}(t|x) \cdot p_X(x)
\end{align*}
The Cox model is $\lambda_{T^*|X}(t|x) = \lambda_0(t) \exp\{f_0(x)\}$, where $\lambda_0(t) = \mathrm{d}\Lambda_0(t) / \mathrm{d}t$ is a baseline hazard, and $f_0(x)$ is a log-hazard ratio. Note $\Lambda_{T^*|X}(t|x) = \exp\{f_0(x)\}\Lambda_0(t)$.
Thus, the negative log-likelihood is
\begin{align*}
    \exp\{f_0(x)\}\Lambda_0(t) - \delta f_0(x) + c(\delta,t,x)
\end{align*}
where $c(\delta,t,x)$ is an additive function that does not depend on $f_0$. Discarding this function, the partial negative log-likelihood loss, and its derivative are
\begin{align*}
    L\{f(X), W; \Lambda_0\} &= \exp\{f(X)\}\Lambda_0(T) - D f(X) \\
    \nabla L\{f(X), W; \Lambda_0\}&= \exp\{f(X)\}\Lambda_0(T) - D,
\end{align*}
which both depend on $P_0$ through the baseline cumulative hazard $\Lambda_0$. The Gateaux derivative of the unit loss is $s(W; h,f,\Lambda_0) = h(X)\nabla L\{f(X), W; \Lambda_0\}$.

\subsection{Breslow Estimator}

Define the at-risk indicator $Y(t) = \mathbf{1}\{T \geq t\}$ and counting process $N(t) = \mathbf{1}\{T \leq t,\, D=1\}$, which we treat as random process over time $t$, e.g. $\Env[Y(t)] = n^{-1}\sum_{i=1}^n\mathbf{1}\{t_i \geq t\}$ averages over the observed values of $\{t_i\}_{i=1}^n$.
Let
\begin{align*}
    S_n^{(0)}(t; f) = \Env[Y(t) \exp\{f(X)\}],
\end{align*}
where the letter `S' is taken to mean `sum', not to be confused with the survival function.
The Breslow estimator of the cumulative hazard is
\begin{align*}
    \hat{\Lambda}_{n,f}(t) = \Env\left[\frac{N(t)}{S_n^{(0)}(T; f)}\right].
\end{align*}
Note that this estimator of the baseline hazard depends on the log-hazard ratio $f$.

\subsection{Efficient Influence Function}
\label{app:eif}

Here we derive the efficient influence function for $R'_0(h; f_*) = \E[s(W; h, f_*, \Lambda_0)]$ under the null hypothesis $H_0: f_*=f_0$.
To do so, we write the naive influence function $s(W; h, f_*, \Lambda_0)$ as the stochastic integral
\begin{align*}
    \int \alpha(t, X)~\mathrm{d}M(t)
\end{align*}
where $\alpha(t, X) = -h(X)$ and $\mathrm{d}M(t) = \mathrm{d}N(t) - Y(t)\exp\{f_*(X)\}\mathrm{d}\Lambda_0(t)$ is a martingale.
Applying \citet[Theorem 5.5]{tsiatis_semiparametric_2006}, the efficient influence function is
\begin{align*}
    \varphi(W; h, f_*, P_0) &=\int \{\bar{h}(t) - h(X)\}~\mathrm{d}M(t) \\&= s(W; h, f_*, \Lambda_0) + D\,\bar{h}(T) - \exp\{f_*(X)\}\int_0^T \bar{h}(t)\,\mathrm{d}\Lambda_0(t),
\end{align*}
where
\begin{align*}
    \bar{h}(t) = \frac{\E[h(X)\exp\{f_*(X)\}Y(t)]}{\E[\exp\{f_*(X)\}Y(t)]}.
\end{align*}

\end{document}